\documentclass[letterpaper]{article}
\usepackage[preprint]{aaai2027}
\usepackage[hyphens]{url}
\usepackage{graphicx}
\usepackage{natbib}
\usepackage{caption}
\usepackage{booktabs}
\usepackage{array}
\usepackage{amsfonts}
\usepackage{amsmath}
\usepackage{amssymb}
\usepackage{amsthm}
\usepackage{xcolor}
\newtheorem{proposition}{Proposition}
\newtheorem{corollary}{Corollary}
\newtheorem{claim}{Claim}
\newtheorem{assumption}{Assumption}

\newcolumntype{L}[1]{>{\raggedright\arraybackslash}p{#1}}

\title{Decision-Metric Alignment in Latent World Models:
Diagnostics and Action-Conditioned Objectives for MPC Planning}

\author{
Jiawei Wang$^{1}$%
\thanks{Equal contribution.}%
\thanks{Corresponding author 
(\texttt{jarvisustc@gmail.com}).},
Yushen Zuo$^{1}$\footnotemark[1],
Ke Rui$^{1}$\footnotemark[1],
Yichun Feng$^{2}$,
Minglei Li$^{1}$\footnotemark[2]
}

\affiliations{
$^{1}$Simple AI, Beijing, China\\
$^{2}$University of Chinese Academy of Sciences, Beijing, China
}

\begin{document}

\maketitle

\begin{abstract}
JEPA-style latent world models can use Euclidean distance to a goal
latent as the cost for model-predictive control (MPC). Strong decoding
of task variables, however, does not guarantee that this particular
cost ranks candidate action sequences by real task progress. We call
the latter property \emph{decision-metric alignment}. We introduce
Plan-Real Spearman, which measures latent--real rank agreement on random
plans, and CEM-stage Spearman, which measures the same agreement as
cross-entropy-method (CEM) search concentrates its proposal. We analyze
sufficient conditions under which latent distance preserves real-cost
rankings, identifying encoder distortion, terminal rollout error, and
candidate margins as the controlling quantities. Guided by the observed
empirical alignment gap, DA-LeWM augments LeWM
with inverse-dynamics and demonstration-conditioned goal-action heads.
Across all our experiments, DA-LeWM accelerates convergence and achieves
higher online success than LeWM, while probe scores remain similar. These
results show that action-conditioned objectives improve the geometry used
by Euclidean-cost, CEM-based latent MPC.
\end{abstract}

\section{Introduction}
\label{sec:intro}

Model predictive control on a learned latent world model is a
principled recipe for robotic manipulation: roll out the world model
along candidate action sequences, select the lowest-cost sequence,
execute, and replan~\citep{lewm2026, ha2018world, hafner2019planet,
hafner2019dreamer, hansen2024tdmpc2}. When the
world model operates in a JEPA-style learned latent
space~\citep{lecun2022jepa, assran2023ijepa}, the planning cost is
typically the squared Euclidean distance between the predicted latent
trajectory and an encoded goal observation, dispensing with the need
for an explicit reward~function.

This recipe exhibits a striking pattern: small changes to the training
objective induce large and often counterintuitive changes in online
planning success, while linear probes for state, action, reward, and
value remain nearly unchanged across the same variants. The dominant
question asked of a latent world model, ``what does the latent
encode?'', is therefore incomplete for planning.

The missing property is geometric rather than informational. The
planner's cost depends on Euclidean distances between latents, so for
the ranking of candidate plans to track real-world outcomes, the
latent's metric structure must be compatible with the structure of
action consequences in the environment. We formalize this as the
distinction between \emph{information sufficiency} (whether
task-relevant quantities can be decoded from the latent) and
\emph{decision-metric alignment} (whether the latent cost ranks
candidate plans consistently with their environmental outcomes). The
two are independent: as Figure~\ref{fig:concept} illustrates, a latent
can be information-sufficient yet geometrically misaligned.

\begin{figure*}[t]
  \centering
  \includegraphics[width=0.94\textwidth]{paper_figures/figure1.pdf}
  \caption{\textit{Information sufficiency does not imply
  decision-metric alignment.} In each panel, the left schematic places
  candidate plans around the goal latent $z_g$, while the right plot
  compares their real costs (horizontal) with their latent Euclidean
  costs (vertical). In (a), the two costs induce the same ordering. In
  (b), candidate 1 is truly low-cost but lies far from $z_g$, whereas
  candidate 3 has higher real cost but lies close. Latent distance
  therefore reverses their preference even though candidate identity
  and rank remain decodable.}
  \label{fig:concept}
\end{figure*}

To make decision-metric alignment testable, we introduce two
diagnostics. \emph{Plan-Real Spearman} measures the rank correlation
between latent costs and real costs over random candidate plans.
\emph{CEM-stage Spearman} extends it to the converged elite candidates
the agent actually executes. We analyze sufficient conditions under which
latent distance preserves real-cost rankings
(Section~\ref{sec:analysis}). The analysis identifies encoder distortion,
terminal rollout error, and candidate margins as the controlling quantities,
motivating stage-wise evaluation as CEM concentrates its proposal. We treat
SIGReg measurements as empirical collapse diagnostics, not as certificates
of the analytical encoder constants. Guided by the observed empirical
alignment gap, we then study action-conditioned auxiliary supervision through
\textbf{DA-LeWM} (Decision-Aligned LeWM), which augments LeWM with
two lightweight auxiliary heads, an inverse-dynamics head and a
goal-conditioned action head. Across all our experiments, DA-LeWM
accelerates convergence and achieves higher online success than LeWM,
while linear-probe scores remain similar.

Our key contributions are as follows:
\begin{itemize}
\item We distinguish information sufficiency from decision-metric
  alignment and introduce Plan-Real Spearman and CEM-stage Spearman
  to measure the latter.
\item We analyze sufficient conditions for rank preservation, identifying
  encoder distortion, terminal rollout error, and candidate margins as
  the controlling quantities.
\item Guided by the observed alignment gap, we introduce DA-LeWM with
  action-conditioned auxiliary supervision. It improves latent--real rank
  agreement in our geometry diagnostics and, across all four environments,
  accelerates convergence and achieves higher online success than LeWM
  despite similar linear-probe scores.
\end{itemize}

\section{Background and Problem Setup}
\label{sec:background}

\subsection{JEPA Latent World Models}

We build on LeWM \citep{lewm2026}, a JEPA-style world model for robotic
manipulation. An encoder $f_\theta$ maps an observation $o_t$ to a latent
embedding $z_t = f_\theta(o_t)$ via a Vision Transformer backbone followed
by a linear projector. An autoregressive predictor $g_\phi$ estimates
future latents conditioned on actions:
\begin{equation}
\hat{z}_{t+1} = g_\phi(z_{t-H:t},\; a_{t-H:t}).
\end{equation}
Training minimizes a prediction loss together with Sketched Isotropic
Gaussian Regularization (SIGReg), introduced by LeJEPA
\citep{balestriero2025lejepa} and adopted by LeWM:
\begin{equation}
\mathcal{L}_\text{WM} = \mathcal{L}_\text{pred} + \lambda_\text{sig}
\mathcal{L}_\text{SIGReg}.
\end{equation}
SIGReg projects embeddings onto random unit directions and
minimizes the one-dimensional Epps--Pulley statistic against a standard
Gaussian, encouraging the aggregate embedding distribution to match an
isotropic Gaussian and discouraging trivial collapse. This is a
distribution-level target: it does not bound encoder-Jacobian singular
values or imply local metric isotropy, injectivity, or bi-Lipschitz
behavior.

\subsection{MPC with Latent Goal Distance}

At test time, MPC is realized by the cross-entropy method (CEM)
\citep{rubinstein1999cem}. Given an initial observation $o_0$ and a goal
observation $o_g$, the planner solves
\begin{equation}
a^* = \arg\min_{a_{0:H}} \|\hat{z}_H - z_g\|^2,
\quad z_g = f_\theta(o_g),
\end{equation}
where $\hat{z}_H$ is obtained by rolling out $g_\phi$ under candidate
action sequences. CEM iteratively refits its proposal distribution to the
top-$E$ elite candidates with lowest latent cost.

\paragraph{Goal acquisition at evaluation time.}
For evaluation, $o_g$ is drawn from a held-out demonstration trajectory
at a fixed offset $\Delta$ ahead of the initial observation. This
guarantees that the goal latent lies in the data distribution and is
reachable by some action sequence. In deployment, the goal image would
typically come from a single human demonstration or sub-goal generation.
Our analysis is independent of the goal-acquisition mechanism.

\subsection{Auxiliary Objectives}

A standard avenue for improving latent representations is to add
auxiliary prediction heads jointly trained with the world model
\citep{schwarzer2021spr, laskin2020curl, gelada2019deepmdp}. We
consider inverse-dynamics, goal-conditioned action, reward-proxy, and
value-proxy heads (full losses are in Supplementary Section~A).
Such objectives are usually judged by probes or downstream control. We
instead ask how they affect the planner's fixed Euclidean cost.

\section{Information Sufficiency vs.\ Decision-Metric Alignment}
\label{sec:framework}

\subsection{Definitions}

A latent space $z=f_\theta(o)$ is \emph{$\epsilon$-information-sufficient}
for task quantities $\{q_k\}$ if probe-specific regressors
$\hat q_k=h_k(x_k)$ attain held-out error at most $\epsilon$.
We test state, inverse-action, reward, value, and goal-action with linear
probes. Supplementary Section~A specifies each input $x_k$.

A latent space is \emph{decision-metric aligned} with respect to a planning
cost $c_\text{lat}$ if, for any pair of candidate action sequences
$\mathbf{a}^{(i)}, \mathbf{a}^{(j)}$ with corresponding real-environment
costs $c_\text{real}^{(i)}, c_\text{real}^{(j)}$,
\begin{equation}
c_\text{lat}(\mathbf{a}^{(i)}) < c_\text{lat}(\mathbf{a}^{(j)})
\quad\Longleftrightarrow\quad
c_\text{real}(\mathbf{a}^{(i)}) < c_\text{real}(\mathbf{a}^{(j)}).
\end{equation}

Decision-metric alignment is an ordinal property, not a numerical one. It
requires the latent cost to correctly rank candidates rather than to be
numerically equal to any reward function. The two properties are
logically independent. As Figure~\ref{fig:concept} illustrates, a
latent can encode all task-relevant information while arranging it in
a geometry where Euclidean distance ranks candidates poorly.

\subsection{Plan-Real and CEM-stage Spearman}
\label{sec:plan-real}

For each (start, goal) pair sampled from a held-out dataset with
offset $\Delta$, we draw $N=64$ candidate action sequences from the
CEM proposal distribution, score each candidate by the world model's
latent cost $c_\text{lat}^{(i)} = \|\hat{z}_H^{(i)} - z_g\|^2$ and by
environment-rollout cost $c_\text{real}^{(i)} = d_\text{task}(s_H^{(i)},
s_g)$, and compute the Spearman rank correlation between the two
across the $N$ candidates. On PushT, $d_\text{task}=\|s_H-s_g\|_2$
over all seven simulator coordinates (agent and block pose, and agent
velocity), rather than thresholded online success. \emph{Plan-Real
Spearman} is the mean over $n = 30$ pairs. Spearman correlation lies in $[-1,1]$: $+1$ denotes
identical rankings, $0$ no monotone association, and $-1$ reversed
rankings. Exact ties receive their average rank. If either cost vector
is constant, the pair-level correlation is undefined. We exclude that
pair from the mean and record the number of defined pairs. Ranking is
the relevant quantity because CEM updates its
proposal from relative candidate order rather than calibrated costs.

CEM does not act on random plans: it iteratively concentrates
candidates toward low-cost regions, and a latent cost that ranks
random plans well may still fail when candidates are near-optimal.
\emph{CEM-stage Spearman} measures the same correlation at three CEM
stages: random (iteration $0$), mid (iteration $15$), and elite (the
final top-$E$ set). Random-stage agreement tests global coarse ordering.
Mid-stage agreement tests the ordering that shapes proposal updates.
Elite-stage agreement tests local discrimination among near-optimal
candidates.

\begin{figure*}[t]
  \centering
  \includegraphics[width=0.98\textwidth]{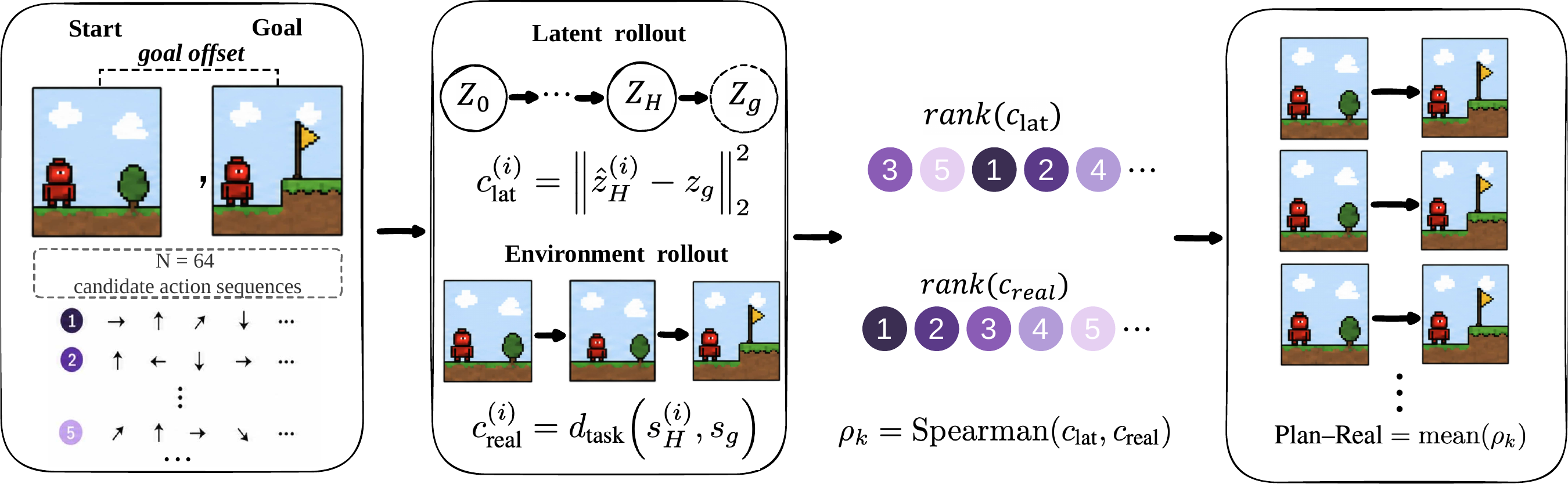}
  \caption{\emph{Plan-Real Spearman measurement procedure.} This operationalizes the
  candidate-order comparison in Figure~\ref{fig:concept}. For each
  held-out pair $k$, the same $N=64$ action sequences are evaluated by
  the world model and the environment, yielding paired latent- and
  real-cost vectors. Spearman gives a pair-level rank correlation $\rho_k$;
  Plan--Real averages the defined correlations among $n=30$ sampled pairs.
  CEM-stage Spearman reuses the paired
  scoring procedure with candidates from the selected CEM population.}
  \label{fig:plan-real}
\end{figure*}

\section{DA-LeWM: A Decision-Aligned Latent World Model}
\label{sec:method}

\subsection{Method}

We propose \textbf{DA-LeWM}, which augments the base LeWM training with
two lightweight auxiliary heads. The full training objective is
\begin{equation}
\mathcal{L}_\text{DA-LeWM} = \mathcal{L}_\text{WM}
+ \alpha\, \mathcal{L}_\text{inv}
+ \beta\, \mathcal{L}_\text{goal},
\end{equation}
where
$\mathcal{L}_\text{inv} = \|h_\text{inv}(z_t, z_{t+1}) - a_t\|^2$
is the inverse-dynamics loss and
$\mathcal{L}_\text{goal} = \|h_\text{goal}(z_t, z_g) - a_t\|^2$
is the goal-conditioned action loss. Both heads are small MLPs with
two hidden layers of width $256$. We set $\alpha = 0.1$ and
$\beta = 0.1$ as the cross-environment default. Sensitivity to $\beta$
is reported in Supplementary Section~C.

\paragraph{Goal-action supervision.}
Training uses four-frame demonstration clips at raw offsets
$0,5,10,15$. For $k\in\{0,1,2\}$, $h_\text{goal}(z_{t+5k},z_{t+15})$ is
supervised by the concatenated, normalized five-raw-action block
$a_{t+5k:t+5k+4}$. The goal is $15,10,$ or $5$ raw steps ahead.
The target reflects demonstrated progress, not an optimizer-computed
globally optimal action. With multimodal demonstrations, squared-error
training may average incompatible actions. The head shapes the
representation and is not used as a policy.

\paragraph{Rationale.}
The inverse-dynamics loss encourages the latent transition to retain
information predictive of $a_t$.
The goal-conditioned action head provides analogous structure relative
to the goal latent. An exploratory all-heads checkpoint using the state-derived proxies
in Supplementary Section~A is retained as an implementation-specific
comparison. Because those proxies are not the environment's reward and
value targets, this checkpoint neither isolates an R/V effect nor tests
correctly specified reward- or value-aware training in general.

\subsection{Sufficient-Condition Analysis}
\label{sec:analysis}

We complement the empirical diagnostics with an analytical account of
when latent-cost ranking can be expected to track real-cost ranking. The
goal is to identify the principal error sources, not to prove a tight
rank-correlation bound.

\paragraph{Setup and assumptions.}

We use the notation of Section~\ref{sec:background}. For compactness,
$f_\theta(s)$ denotes encoding the rendered observation $o=R(s)$. The
encoder does not consume simulator state directly. Let $T$ denote the
environment dynamics, and write $s_H(\mathbf{a})$ and $\hat{z}_H(\mathbf{a})$
for the real and predicted terminal states under plan $\mathbf{a}$. Real
and latent costs are $c_\text{real}(\mathbf{a}) = \|s_H(\mathbf{a}) -
s_g\|_\Sigma$ and $c_\text{lat}(\mathbf{a}) = \|\hat{z}_H(\mathbf{a}) -
z_g\|_2$, respectively, for some PSD~metric~$\Sigma$. The planner
uses the squared latent norm, but positive squaring preserves every
candidate ranking, so the analysis uses the unsquared norm. The PushT
diagnostics instantiate $\Sigma=I$ on the full seven-dimensional state.

\begin{assumption}[Pointwise terminal-rollout consistency]
\label{ass:pred}
For every horizon-$H$ candidate plan in the analysis, terminal rollout
error satisfies
$\|\hat{z}_H(\mathbf{a})-f_\theta(s_H(\mathbf{a}))\|_2
\leq \epsilon_H$.
\end{assumption}

\begin{assumption}[Encoder bi-Lipschitz]
\label{ass:bilip}
There exist constants $0 < \mu_f \leq L_f$ such that, for the
terminal-state/goal pairs induced by candidate plans in the analysis,
\[
\mu_f \|s - s'\|_\Sigma \;\leq\; \|f_\theta(s) - f_\theta(s')\|_2
\;\leq\; L_f \|s - s'\|_\Sigma.
\]
\end{assumption}

SIGReg empirically discourages representational collapse, but by itself
does not establish Assumption~\ref{ass:bilip} or a lower bound on
$\mu_f$. We therefore state bi-Lipschitz behavior as an assumption and
measure collapse-related surrogates separately. Training losses are
empirical diagnostics, not proofs that these assumptions hold off
distribution.

\paragraph{Pointwise cost-approximation bound.}

\begin{proposition}[Two-sided cost-approximation bound]
\label{prop:sufficient}
Under Assumptions~\ref{ass:pred} and \ref{ass:bilip}, for any
horizon-$H$ plan $\mathbf{a}$,
\begin{equation}
\mu_f\, c_\text{real}(\mathbf{a}) - \epsilon_H
\;\leq\;
c_\text{lat}(\mathbf{a})
\;\leq\;
L_f\, c_\text{real}(\mathbf{a}) + \epsilon_H.
\label{eq:pointwise}
\end{equation}
\end{proposition}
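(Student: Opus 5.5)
The plan is to insert the encoded true terminal state $f_\theta(s_H(\mathbf{a}))$ as an intermediate point between the predicted latent $\hat{z}_H(\mathbf{a})$ and the goal latent $z_g = f_\theta(s_g)$, and then apply the triangle inequality in both directions. Write $u = \hat{z}_H(\mathbf{a}) - f_\theta(s_H(\mathbf{a}))$ for the rollout error and $v = f_\theta(s_H(\mathbf{a})) - f_\theta(s_g)$ for the encoder displacement. Then $c_\text{lat}(\mathbf{a}) = \|u + v\|_2$. Assumption~\ref{ass:pred} gives $\|u\|_2 \le \epsilon_H$ directly.

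For the upper bound, I would use $\|u+v\|_2 \le \|v\|_2 + \|u\|_2$. The first term is bounded by the upper Lipschitz constant in Assumption~\ref{ass:bilip}, applied to the pair $(s,s') = (s_H(\mathbf{a}), s_g)$, which gives $\|v\|_2 \le L_f \|s_H(\mathbf{a}) - s_g\|_\Sigma = L_f\, c_\text{real}(\mathbf{a})$. The second term is at most $\epsilon_H$. Together these yield the right-hand inequality of \eqref{eq:pointwise}.

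For the lower bound, I would use the reverse triangle inequality $\|u+v\|_2 \ge \|v\|_2 - \|u\|_2$. The lower bi-Lipschitz constant gives $\|v\|_2 \ge \mu_f\, c_\text{real}(\mathbf{a})$, and subtracting $\|u\|_2 \le \epsilon_H$ gives the left-hand inequality. The bound may be negative when $\mu_f c_\text{real} < \epsilon_H$. It then holds trivially, since $c_\text{lat} \ge 0$, so no case split is needed.

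There is no substantive obstacle; the proof is two lines of triangle inequality. The one point to check is scope. Assumption~\ref{ass:bilip} is stated only for the terminal-state/goal pairs induced by candidate plans in the analysis, so I would note explicitly that the pair $(s_H(\mathbf{a}), s_g)$ for the plan $\mathbf{a}$ under consideration is such a pair. I would also note that $z_g = f_\theta(s_g)$ is the exact encoding of the goal, with no prediction error, because the goal is encoded from a real observation. This is why only a single $\epsilon_H$ appears in \eqref{eq:pointwise}.
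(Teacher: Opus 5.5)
Your proposal is correct and follows the same route as the paper: the paper likewise inserts $f_\theta(s_H)$ and uses Assumption~\ref{ass:pred} with the reverse triangle inequality to get $\bigl|c_\text{lat} - \|f_\theta(s_H) - z_g\|_2\bigr| \le \epsilon_H$. It then brackets $\|f_\theta(s_H) - z_g\|_2$ between $\mu_f c_\text{real}$ and $L_f c_\text{real}$ via Assumption~\ref{ass:bilip} with $z_g = f_\theta(s_g)$. Your explicit remarks that $(s_H(\mathbf{a}), s_g)$ lies in the scope of the bi-Lipschitz assumption, and that the exact goal encoding is why only one $\epsilon_H$ appears, make explicit what the paper's sketch leaves implicit.
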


The bracket width $(L_f - \mu_f) c_\text{real} + 2\epsilon_H$
decomposes into an \emph{encoder distortion term} and a
\emph{predictor error term}. Our experiments measure collapse-related
and rollout-error diagnostics, but do not identify these constants.

\begin{proof}[Proof sketch]
The reverse triangle inequality and Assumption~\ref{ass:pred} give
$\big|c_\text{lat} - \|f_\theta(s_H) - z_g\|_2\big| \leq \epsilon_H$.
Assumption~\ref{ass:bilip} (and $z_g\!=\!f_\theta(s_g)$) gives
$\|f_\theta(s_H) - z_g\|_2 \in [\mu_f c_\text{real}, L_f c_\text{real}]$.
\end{proof}

\begin{corollary}[Margin-implied rank preservation]
\label{cor:rank}
Under the assumptions of Prop.~\ref{prop:sufficient}, for any pair
with $c_\text{real}^{(i)} < c_\text{real}^{(j)}$,
$\mu_f c_\text{real}^{(j)} > L_f c_\text{real}^{(i)} + 2\epsilon_H$
implies $c_\text{lat}^{(i)} < c_\text{lat}^{(j)}$. If every pair of
$N$ distinct candidates satisfies this, Plan-Real Spearman is $+1$.
(Proof: subtract the upper bound at $i$ from the lower bound at~$j$.)
\end{corollary}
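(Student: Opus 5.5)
The plan is to chain the two one-sided bounds of Proposition~\ref{prop:sufficient} across the pair, and then lift the pairwise conclusion to the Spearman statement. Fix candidates $i,j$ with $c_\text{real}^{(i)} < c_\text{real}^{(j)}$. Each candidate is a horizon-$H$ plan covered by Assumptions~\ref{ass:pred} and~\ref{ass:bilip}, so Proposition~\ref{prop:sufficient} applies to both. At $i$ I use the upper bound, $c_\text{lat}^{(i)} \le L_f c_\text{real}^{(i)} + \epsilon_H$. At $j$ I use the lower bound, $c_\text{lat}^{(j)} \ge \mu_f c_\text{real}^{(j)} - \epsilon_H$. Subtracting gives
\[
c_\text{lat}^{(j)} - c_\text{lat}^{(i)} \;\ge\; \mu_f c_\text{real}^{(j)} - L_f c_\text{real}^{(i)} - 2\epsilon_H ,
\]
which is strictly positive by the margin hypothesis. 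Hence $c_\text{lat}^{(i)} < c_\text{lat}^{(j)}$.

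For the Spearman claim, I read ``every pair satisfies this'' as follows. The $N$ candidates have pairwise distinct real costs, and for every pair the candidate with the larger real cost obeys the margin inequality against the one with the smaller. The pairwise step then shows that the latent ordering agrees with the real ordering on every pair. In particular there are no latent ties, and none arise on the real side either. Both cost vectors therefore have the same strict total order, so the rank vectors coincide exactly. Also $N\ge 2$ with distinct values means neither cost vector is constant, so the pair-level correlation is defined, and Spearman's $\rho$ is the Pearson correlation of two identical nonconstant rank vectors, which equals $+1$. If every sampled (start, goal) pair meets the hypothesis, the average over the defined pairs is also $+1$.

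I expect the main obstacle to be interpretive rather than computational. ``Distinct candidates'' has to be understood as distinct real costs; otherwise real ties would get averaged ranks and the strict-inequality hypothesis would not cover those pairs. I will make that reading explicit, and note that Plan-Real Spearman equals $+1$ only when every evaluated (start, goal) pair meets the hypothesis.
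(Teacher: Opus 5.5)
Your proposal is correct and matches the paper's one-line proof: you subtract the upper bound of Proposition~\ref{prop:sufficient} at $i$ from the lower bound at $j$, and then read off Spearman $+1$ from the identical strict orderings of the two cost vectors. Your reading of ``distinct candidates'' as distinct real costs is forced by the hypothesis itself, since for a tied pair the margin inequality would read $(\mu_f - L_f)\,c_\text{real}^{(i)} > 2\epsilon_H \ge 0$, which is impossible when $\mu_f \le L_f$ and $c_\text{real}^{(i)} \ge 0$; also, squaring the latent cost, as the Plan-Real protocol does, leaves all ranks unchanged.
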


The condition requires (i) a low-distortion encoder
($L_f/\mu_f\!\to\!1$) and (ii) non-vanishing
candidate margins
$\Delta_{ij}\!=\!c_\text{real}^{(j)}\!-\!c_\text{real}^{(i)}$. The
latter can be small on contact-rich tasks under random sampling and can
shrink further as CEM concentrates its proposal.  This motivates
measuring rank agreement at random, intermediate, and elite stages
rather than assuming that one sampling regime is representative.

\paragraph{Empirical consistency check.}
Fit $c_\text{lat}\!\approx\!\kappa c_\text{real}$ and let
$\eta_i\!=\!|c_\text{lat}^{(i)}-\kappa c_\text{real}^{(i)}|$.
Kendall's $\tau_a$ averages pairwise concordance signs (ties count as
zero). The soft-margin rate $p$ is the fraction of distinct-real-cost
pairs satisfying $\kappa\Delta_{ij}>\eta_i+\eta_j$. Thus
$\tau_a\!\geq\!2p\!-\!1$ without real-cost ties (tie-aware form in
Supplementary Section~F). On PushT, $p$ tracks Plan-Real Spearman
with pooled Pearson $+0.895$ over $100$ pairs (Supplementary Section~F).

\paragraph{The role of inverse-dynamics training.}

\begin{claim}[Inverse-action mechanism]
\label{claim:inverse}
Inverse-action prediction encourages action-relevant latent transitions:
the latent displacement $z_{t+1} - z_t$ must contain enough information
to recover $a_t$. If the latent transition geometry is locally
well-conditioned, neither degenerate along action-relevant directions
nor anisotropically dilated relative to the data distribution, this
constraint manifests as an increased rank correlation between latent
displacement magnitude and action magnitude. Empirically, we observe
that inverse-action training increases this global rank correlation
from $-0.03$ for the LeWM baseline to $+0.38$ for inverse-only and
$+0.43$ for DA-LeWM. Inverse-only produces the largest single increment
in Plan-Real Spearman among the measured ablations.
\end{claim}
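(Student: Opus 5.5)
The claim mixes a conditional mechanism with empirical observations, so I will split it. The mechanism part can be argued; the numbers $-0.03$, $+0.38$, $+0.43$ and the ordering of Plan-Real Spearman increments can only be checked against the ablation tables. The plan is to make the mechanism precise under explicit local conditions and then verify the empirical assertions by direct measurement.

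\emph{Information part.} Suppose $\mathcal{L}_\text{inv}$ is driven to a small value $\delta$ on the data distribution. Write $z_{t+1}=z_t+\Delta z_t$. Then $h_\text{inv}(z_t,z_t+\Delta z_t)$ approximates $a_t$ in mean square, so $a_t$ is (approximately) a measurable function of $(z_t,\Delta z_t)$. Consequently, two transitions from the same $z_t$ with distinct actions cannot share the same displacement. I would state this as an injectivity-in-action property: if $h_\text{inv}$ is $L_h$-Lipschitz in its second argument, then
\[
\|a-a'\| \le L_h\|\Delta z-\Delta z'\| + O(\sqrt{\delta}).
\]
This rules out degeneracy along action-relevant directions, which is the "must contain enough information" statement.

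\emph{Magnitude part.} This is the main obstacle, because recoverability alone does not force $\|\Delta z\|$ to be monotone in $\|a\|$. An arbitrary invertible reparametrization of displacements preserves information but can scramble magnitudes. I would therefore use the well-conditioning hypothesis of the claim explicitly. Linearize the transition locally as $\Delta z_t \approx J(z_t)\,a_t$, for instance via the predictor's Jacobian in the action. Assume the singular values of $J$ restricted to the action-relevant subspace lie in $[\sigma_{\min},\sigma_{\max}]$, with $\sigma_{\max}/\sigma_{\min}$ bounded uniformly over the data. Then
\[
\sigma_{\min}\|a_t\| - r_t \le \|\Delta z_t\| \le \sigma_{\max}\|a_t\| + r_t,
\]
where $r_t$ is the linearization residual. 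This sandwich has the same form as Proposition~\ref{prop:sufficient}.

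From there I would reuse the argument of Corollary~\ref{cor:rank} and the soft-margin consistency check. Any pair with $\sigma_{\min}\|a_j\| > \sigma_{\max}\|a_i\| + r_i + r_j$ is concordant. Hence Kendall's $\tau_a \ge 2p-1$, where $p$ is the fraction of pairs meeting this margin. Spearman increases along with concordance, up to the standard relation between $\tau$ and Spearman. The inverse-dynamics loss enters by making $\sigma_{\min}$ bounded away from zero, so that $p$ and the correlation rise.

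I would state plainly that the conclusion is conditional. Anisotropy ($\sigma_{\max}/\sigma_{\min}$ large) or large residuals can destroy it, and SIGReg does not certify either condition. This mirrors the caveat stated after Assumption~\ref{ass:bilip}.

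\emph{Empirical part.} I would compute, on held-out transitions, the Spearman correlation between $\|z_{t+1}-z_t\|$ and $\|a_t\|$ for the LeWM, inverse-only, and DA-LeWM checkpoints. I would then compare the Plan-Real Spearman deltas across the ablations to confirm that inverse-only gives the largest single increment.
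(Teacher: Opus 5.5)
Your empirical part is exactly how the paper supports this claim. The values $-0.03$, $+0.38$, $+0.43$ are the global Spearman between $\|\Delta z\|$ and $\|a\|$ on $38{,}400$ held-out PushT transitions (Supplementary Section~B). The increment statement is read off Table~\ref{tab:plan-real-pusht}: $+0.280\to+0.420$ for inverse-only, with all-heads and DA-LeWM at $+0.410$ and $+0.412$. The paper deliberately stops there. It calls the claim mechanistic rather than a theorem and notes that low inverse loss does not force norm alignment.

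The gap is in your derivation of the mechanism, specifically the step ``the inverse-dynamics loss enters by making $\sigma_{\min}$ bounded away from zero, so that $p$ and the correlation rise.'' Non-degeneracy is not what separates the checkpoints. The LeWM baseline already decodes $a_t$ from $[z_t,z_{t+1}]$ with an affine probe at $R^2=0.86$, versus $0.88$ for inverse-only (Table~\ref{tab:probe-online}). Writing that probe as $W_1z_t+W_2z_{t+1}+b\approx a_t$ gives $W_2J\approx I$, hence $\sigma_{\min}(J)\gtrsim 1/\|W_2\|$, for the baseline too. Yet the baseline's norm correlation is $-0.03$. Whatever changed must therefore lie in $\sigma_{\max}/\sigma_{\min}$, or in the action-independent part of $\Delta z$ that your offset-free linearization $\Delta z_t\approx J(z_t)a_t$ pushes into $r_t$ (for example, drift from agent velocity and contact). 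Those are exactly the quantities you assume rather than derive. $\mathcal{L}_\text{inv}$ cannot control them, because $L_h$ is free: a rescaled head decodes a tiny or purely directional action component equally well. Your Lipschitz bound also needs a pointwise loss bound, since a mean-square $\delta$ does not yield a uniform $O(\sqrt{\delta})$ term.

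Even granting the sandwich, raising a lower bound does not show that the correlation rises. Moreover, $\tau_a\ge 2p-1$ is typically vacuous. In the paper's analogous check for Corollary~\ref{cor:rank}, $2p-1$ is negative for every variant (between $-0.557$ and $-0.351$) while $\tau_a$ is positive. ``Increased'' is a comparative statement. It needs the conditioning hypotheses to hold with good constants for the inverse-trained checkpoints and to fail for LeWM, and that can only be settled by measurement. Note also that the inverse head and the measured displacement both use encoder latents. The relevant Jacobian is therefore that of $a\mapsto f_\theta(R(T(s,a)))$, not the predictor's, and it is nonsmooth under contact. The defensible version of your proposal is the paper's: state the conditional mechanism as a heuristic, and let the held-out norm correlation serve as its falsifiable signature.
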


The claim is mechanistic, not a theorem. Low held-out inverse-prediction
error is an empirical diagnostic rather than an assumption of
Proposition~\ref{prop:sufficient}, and it does not force norm alignment:
action information could in principle be encoded directionally. We report
latent-displacement-vs-action-norm rank correlation in
Supplementary Section~B as a falsifiable signature of the mechanism.

\paragraph{Implications.}
Encoder distortion ($L_f/\mu_f$) and terminal rollout error
($\epsilon_H$) are the two analytical quantities. SIGReg and predictor
training are empirical interventions associated with their diagnostics,
not certificates of the constants. The inverse-action loss is instead
mediated by Claim~\ref{claim:inverse}. Reward/value decodability is
absent from the bound. We retain the exploratory all-heads checkpoint as
an implementation-specific comparison using the proxy R/V targets
described in Supplementary Section~A. It is not a general test of
reward- or value-aware world-model training.

\section{Experiments}
\label{sec:experiments}
\begin{figure*}[t]
  \centering
  \includegraphics[width=0.88\textwidth]{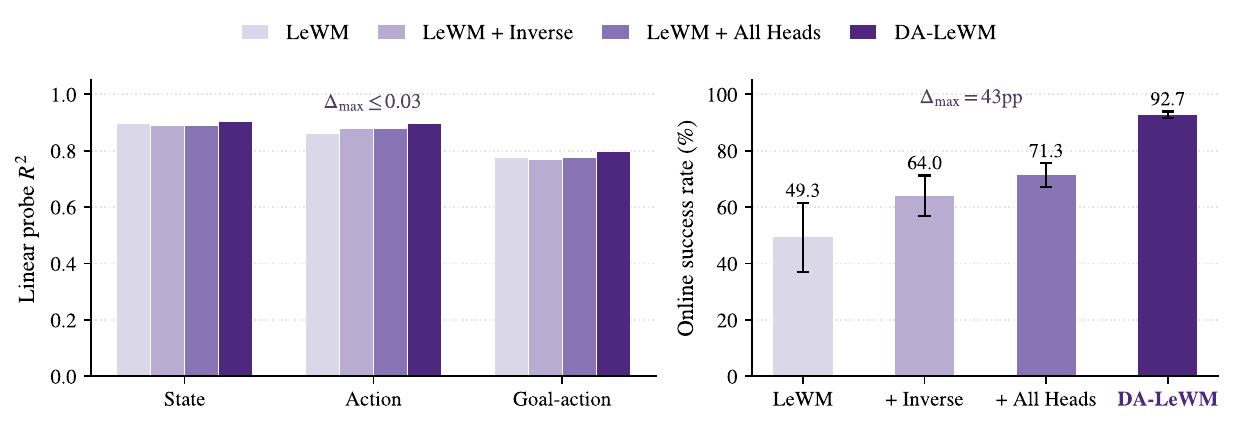}
  \caption{\emph{Probe accuracy vs.\ online success on PushT
  ($3$ evaluation seeds).} Probe scores are nearly identical across the
  four non-collapsed variants, while online success differs sharply.}
\label{fig:probe-vs-online}
\end{figure*}

\subsection{Setup}
\label{sec:exp-setup}

We evaluate on four environments: PushT, Reacher, Cube, and TwoRoom,
spanning planar object pushing, continuous-control reaching, contact-rich
manipulation, and visual navigation. Dataset and task details are provided
in Supplementary Section~A.

\paragraph{Evaluation protocol.}
All methods use matched training and planning budgets. Online success is
measured under closed-loop, receding-horizon MPC over $50$ episodes for
each of $3$ evaluation seeds. Full training, CEM, and replanning details
are provided in Supplementary Section~A.

\paragraph{Variants compared.}
We compare five variants. LeWM is the SIGReg-regularized baseline with no
auxiliary heads. No-SIGReg removes the regularizer and serves as the
collapse control. Inverse-only adds only the inverse-action head.
All-heads adds inverse, goal-action, reward-proxy, and value-proxy heads,
whereas DA-LeWM uses only the inverse and goal-action heads. We treat
all-heads as exploratory because its reward/value targets are
implementation-specific.
Supplementary Section~C reports sensitivity to the goal-action weight.
\paragraph{Evaluation sequence.}
We first use No-SIGReg to establish the collapse sanity check. We then
compare non-collapsed PushT variants across information probes, Plan-Real
Spearman, held-out action geometry, and online success. Next, CEM-stage
Spearman tracks rank agreement throughout planning. Finally, short-budget
and ten-epoch evaluations test control across tasks before
Table~\ref{tab:external-reference} places DA-LeWM alongside published
baselines.

\subsection{SIGReg Preserves Cost-Surface Variation}
\label{sec:exp-sigreg}

We audit cost-surface variation by the ratio of maximum to minimum latent
cost over a random candidate population. Removing SIGReg contracts this
ratio from $3\!-\!30\times$ to $\approx\!1.005\times$. A ratio near one
means that the planner receives almost the same objective for every plan.
Table~\ref{tab:sigreg} shows that this flattening coincides with near-zero
Plan-Real Spearman and large online-success drops on PushT, TwoRoom, and
Reacher.

\begin{table}[t]
\centering
\small
\setlength{\tabcolsep}{3pt}
\begin{tabular}{@{}lcccc@{}}
\toprule
& \multicolumn{2}{c}{Success (\%)} & \multicolumn{2}{c}{Plan-Real Sp}\\
\cmidrule(lr){2-3}\cmidrule(l){4-5}
Task & LeWM & No-SIG & LeWM & No-SIG \\
\midrule
PushT & $49.3 \pm 12.2$ & $\phantom{0}2.0 \pm 2.0$ & $+0.280$ & $+0.031$ \\
TwoRoom & $98.0 \pm \phantom{0}2.0$ & $41.3 \pm 6.1$ & $+0.549$ & $+0.012$ \\
Reacher & $82.0 \pm \phantom{0}2.0$ & $10.7 \pm 8.1$ & $+0.504$ & $+0.001$ \\
\bottomrule
\end{tabular}
\caption{\emph{Effect of removing SIGReg across three environments}
($3$ evaluation seeds, mean $\pm$ std).}
\label{tab:sigreg}
\end{table}

Cube is handled separately in Section~\ref{sec:cross-env}, because random
no-contact candidates provide too few distinct real costs for a supported
Spearman estimate.

\begin{figure*}[!t]
\centering
\includegraphics[width=0.98\textwidth]{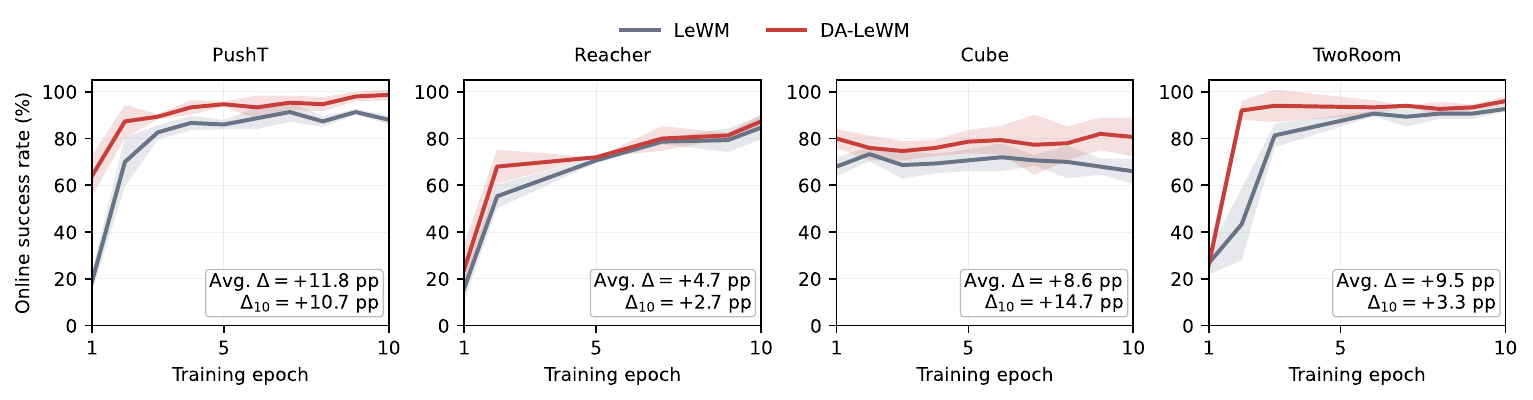}
\caption{\emph{Online success throughout ten training epochs}
($3$ evaluation seeds per task, mean $\pm$ std). Shading shows one
standard deviation. Annotations report DA-LeWM minus LeWM averaged over
epochs (Avg.\ $\Delta$) and at epoch 10 ($\Delta_{10}$).}
\label{fig:training-dynamics}
\end{figure*}

\subsection{Non-Collapsed Variants Separate Information from Planning Geometry}
\label{sec:exp-probe}

We next compare LeWM, inverse-only, all-heads, and DA-LeWM after excluding
the collapsed control. Their state, action, and goal-action probe scores vary
by at most $0.03$ in $R^2$ (Figure~\ref{fig:probe-vs-online}). In contrast,
Plan-Real Spearman changes from $+0.280$ for LeWM to $+0.410$--$+0.420$
for action-supervised variants (Table~\ref{tab:plan-real-pusht}), while online
success spans $43$ percentage points ($49.3\%\!\to\!92.7\%$). This comparison
isolates decision-metric alignment from information sufficiency rather than
conflating either property with representation collapse.
Table~\ref{tab:plan-real-pusht} retains No-SIGReg only as the collapse
reference.
For each of the $30$ held-out start--goal pairs, Spearman ranks $64$
candidates. The table reports the mean, and ``Positive pairs'' counts how
many pair-level correlations exceed zero.

\paragraph{Inverse-action mechanism.}
\label{sec:exp-inverse}

On PushT, inverse-only raises Plan-Real Spearman from $+0.280$ to $+0.420$
and online success from $49.3\%$ to $64.0\%$. DA-LeWM retains a similar
$+0.412$ Spearman and reaches $92.7\%$ success. Thus inverse-only accounts
for most of the measured ranking lift, while the combined objective is
associated with the additional online lift.

The checkpoint-level change is accompanied by the held-out transition
signature in Claim~\ref{claim:inverse}: correlation between latent- and
action-displacement magnitudes rises from $-0.03$ for LeWM to $+0.38$ for
inverse-only and $+0.43$ for DA-LeWM, versus $-0.13$ without SIGReg.
Because MPC continues to use Euclidean goal distance rather than an
auxiliary-head output, this evidence concerns the geometry exposed to the
planner, not an added test-time cost.

\begin{table}[t]
\centering
\begin{tabular}{@{}lcc@{}}
\toprule
Model & Plan-Real Sp & Positive pairs (out of $30$) \\
\midrule
LeWM & $+0.280$ & $24$ \\
No-SIGReg & $+0.031$ & $14$ \\
\textbf{Inverse-only} & $\mathbf{+0.420}$ & $30$ \\
All-heads & $+0.410$ & $30$ \\
DA-LeWM & $+0.412$ & $29$ \\
\bottomrule
\end{tabular}
\caption{\emph{Full-state Plan-Real Spearman on PushT} ($n = 30$).}
\label{tab:plan-real-pusht}
\end{table}

Supplementary Section~B gives the corresponding values for all variants.

\subsection{CEM-stage Spearman: Global Ranking Lift and Elite Saturation on PushT}
\label{sec:rv-elite}

Table~\ref{tab:cem-elite} follows CEM iterations $0/15/29$, corresponding
to the random, mid, and elite stages. The random stage measures global
candidate ordering. The later snapshots test ordering after the proposal
has adapted to the model's own cost. This stage-wise audit operationalizes
the candidate-margin term in the analysis instead of relying on a single
global correlation.

\begin{table}[t]
\centering
\begin{tabular}{@{}lccc@{}}
\toprule
Model & Random & Mid & Elite \\
\midrule
LeWM & $+0.403$ & $+0.227$ & $+0.036$ \\
No-SIGReg & $+0.029$ & $-0.017$ & $-0.102$ \\
Inverse-only & $+0.523$ & $\mathbf{+0.261}$ & $-0.089$ \\
All-heads & $+0.515$ & $+0.249$ & $-0.010$ \\
\textbf{DA-LeWM} & $\mathbf{+0.536}$ & $+0.253$ & $-0.011$ \\
\bottomrule
\end{tabular}
\caption{\emph{Full-state CEM-stage Spearman on PushT} ($n\!=\!15$
held-out pairs, CEM budget $300\!\times\!30$, top-$30$ elites).}
\label{tab:cem-elite}
\end{table}

Relative to LeWM, the three action-supervised variants produce consistent
random-stage gains of $+0.112$ to $+0.133$ (paired
$p\!\leq\!0.012$), with DA-LeWM highest at $+0.536$. These variants remain at
$+0.249$--$+0.261$ mid-stage versus $+0.227$ for LeWM. Elite-stage
correlations are near zero for every variant, indicating shared local
saturation. Supplementary Section~E reports paired significance and
local-geometry checks.
Tables~\ref{tab:plan-real-pusht} and~\ref{tab:cem-elite} therefore answer
complementary questions. The former tests independently sampled plans,
while the latter follows the distributions visited by the optimizer.

\subsection{Short-Budget Control Across Environments}
\label{sec:cross-env}

Having established the PushT ranking effect, we test whether it translates
to control under the same one-epoch budget. Table~\ref{tab:cross-env}
shows that DA-LeWM improves PushT by $43.4$ pp and Cube by $10.6$ pp,
while Reacher remains within its strong baseline band.

\begin{table}[t]
\centering
\begin{tabular}{@{}lccc@{}}
\toprule
Model & PushT & Reacher & Cube \\
\midrule
LeWM & $49.3 \pm 12.2$ & $82.0 \pm 2.0$ & $62.7 \pm 4.2$ \\
No-SIGReg & $\phantom{0}2.0 \pm \phantom{0}2.0$ & $10.7 \pm 8.1$ & $52.7 \pm 1.9$ \\
Inverse-only & $64.0 \pm \phantom{0}7.2$ & $82.7 \pm 3.1$ & $68.0 \pm 4.0$ \\
\textbf{DA-LeWM} & $\mathbf{92.7 \pm \phantom{0}1.2}$
  & $\mathbf{84.0 \pm \phantom{0}3.5}$ & $\mathbf{73.3 \pm 1.2}$ \\
\bottomrule
\end{tabular}
\caption{\emph{Cross-environment online success} (matched training
budget, $3$ evaluation seeds for all variants.)}
\label{tab:cross-env}
\end{table}

\paragraph{Diagnostic scope.}
Reacher variants stay within $[+0.49,+0.53]$, consistent with strong
baseline rank agreement. For Cube, exact no-contact ties leave too few
distinct real costs for a supported Plan-Real estimate. Its No-SIGReg
latent-cost range still contracts from $4\times$ to $1.07\times$, while
online success drops from $62.7\%$ to $52.7\%$
(Table~\ref{tab:cross-env}). We treat these as collapse and online-control
evidence, not as a Cube Spearman or global bi-Lipschitz claim. Sensitivity
to the goal-action weight $\beta$ is in Supplementary Section~C.

\subsection{Learning Dynamics under a Larger Budget}
\label{sec:long-run}

We next increase the training budget to ten epochs and compare LeWM with
DA-LeWM on all four tasks. Figure~\ref{fig:training-dynamics} reports the
resulting learning curves, with complete per-epoch statistics in
Supplementary Section~H.

The learning curves test whether the large one-epoch gains are merely an
early-training effect. DA-LeWM enters the high-success regime earlier and
maintains higher success throughout training on all four tasks. Its
advantage therefore persists as LeWM receives more optimization, supporting
accelerated convergence rather than a favorable single checkpoint.

Table~\ref{tab:external-reference} then asks whether faster learning also
yields strong absolute performance against LeWM~\citep{lewm2026},
PLDM~\citep{sobal2025pldm}, and DINO-WM~\citep{zhou2025dinowm}.
Relative to LeWM, DA-LeWM improves success by $2.7$ pp on PushT, $1.3$ pp
on Reacher, $6.7$ pp on Cube, and $9.0$ pp on TwoRoom. It ranks first on
PushT ($98.7\%$) and Reacher ($87.3\%$). On Cube, it surpasses PLDM and
remains within $5.3$ pp of DINO-WM. On TwoRoom, it reaches $96.0\%$,
within $1.0$ pp of PLDM and $4.0$ pp of DINO-WM while remaining $9.0$ pp
above LeWM. Thus the faster learning is accompanied by stronger control
performance rather than only an earlier optimization advantage.

The auxiliary heads are used only during training and are discarded at
evaluation. LeWM and DA-LeWM therefore use the same inference-time MPC
procedure and incur the same planning-time computation. Their separation
along the learning curves isolates a representation-learning benefit:
action-conditioned supervision makes the latent geometry useful to the
unchanged planner earlier in training. Together,
Figure~\ref{fig:training-dynamics} and Table~\ref{tab:external-reference}
show improvements in both learning efficiency and control performance across
task regimes.

\begin{center}
\begin{minipage}{\columnwidth}
\centering
\footnotesize
\setlength{\tabcolsep}{2.5pt}
\begin{tabular}{@{}lcccc@{}}
\toprule
Method & PushT & Reacher & Cube & TwoRoom \\
\midrule
Random & $2$ & $10$ & $48$ & $0$ \\
PLDM~\citep{sobal2025pldm} & $78$ & $78$ & $65$ & $97$ \\
DINO-WM~\citep{zhou2025dinowm} & $74$ & $79$ & $\mathbf{86}$ & $\mathbf{100}$ \\
LeWM~\citep{lewm2026} & $96$ & $86$ & $74$ & $87$ \\
\midrule
\textbf{DA-LeWM (ours)} & $\mathbf{98.7}$ & $\mathbf{87.3}$ & $80.7$ & $96.0$ \\
\bottomrule
\end{tabular}
\captionof{table}{\emph{Online success comparison (\%).} DA-LeWM values are
means over $3$ evaluation seeds.}
\label{tab:external-reference}
\end{minipage}
\end{center}
\vspace{0.5\baselineskip}

\section{Related Work}
\label{sec:related}

\paragraph{JEPA and latent world models.}
Joint-embedding predictive architectures learn visual representations by
predicting in embedding space rather than reconstructing pixels
\citep{lecun2022jepa,assran2023ijepa,bardes2024vjepa}. Masked autoencoding
and self-distillation provide complementary self-supervised routes
\citep{he2022mae,caron2021dino}. LeWM adapts joint-embedding prediction to
action-conditioned visual dynamics for manipulation, PLDM learns latent
dynamics for offline reward-free control, and DINO-WM plans over frozen
pretrained features \citep{lewm2026,sobal2025pldm,zhou2025dinowm}. These
works establish capable representations and predictors for latent planning.
Linear probes assess what is decodable \citep{alain2017probes}. Our question
is whether Euclidean distance to a goal latent induces the correct ordering
over candidate action sequences.

\paragraph{Auxiliary objectives in RL and world models.}
Inverse dynamics has been used for curiosity and action-relevant
representation learning
\citep{pathak2017curiosity,zhang2021invariant,gelada2019deepmdp}, while
contrastive and self-predictive objectives improve pixel-based RL
\citep{laskin2020curl,schwarzer2021spr}. Demonstration-conditioned
goal-action prediction relates to goal-conditioned imitation and hindsight
relabeling \citep{ding2019goal,andrychowicz2017her}. Here these established
objectives provide action-conditioned supervision. We test how that
supervision reshapes the metric structure consumed by fixed-cost latent MPC.

\paragraph{MPC and planning with learned models.}
PETS plans under learned rewards, while Dreamer learns behavior through
reward-driven latent imagination \citep{chua2018pets,hafner2019dreamer,
hafner2020dreamerv2,hafner2023dreamerv3}. MuZero and TD-MPC learn
value-equivalent dynamics, and DayDreamer extends world-model control to
physical robots \citep{schrittwieser2020muzero,hansen2022tdmpc,
hansen2024tdmpc2,wu2023daydreamer}. Our setting isolates a complementary
requirement: when Euclidean goal distance is itself the planning cost,
accurate prediction or decoding need not imply correct relative ordering.
We measure that ordering globally and as CEM concentrates its proposal.

\paragraph{Decision-aware and bisimulation-based representations.}
Bisimulation metrics ground state similarity in reward and transition
consequences \citep{ferns2004bisimulation,castro2020scalable,
zhang2021invariant}. Value equivalence provides an analogous criterion for
model predictions \citep{grimm2020value}. Plan-Real and CEM-stage Spearman
are planner-aligned empirical counterparts in this spirit: they audit the
fixed metric actually consumed by the planner rather than learning a new
task-specific metric.

\section{Conclusion}
\label{sec:conclusion}

We study when Euclidean goal distance provides a reliable objective for latent
MPC, separating information sufficiency from decision-metric alignment.
Plan-Real and CEM-stage Spearman operationalize this distinction, while the
sufficient-condition analysis identifies encoder distortion, terminal rollout
error, and candidate margins as the quantities controlling rank preservation.
Across the non-collapsed variants, similar linear-probe scores coexist with
substantial differences in Plan-Real agreement and online success, showing
that this geometric distinction matters for control. Guided by this empirical
gap, DA-LeWM uses action-conditioned supervision to improve the cost geometry
exposed to the planner. Across all four environments, DA-LeWM accelerates
convergence and achieves higher online success than LeWM.
Latent world models used for control should therefore be evaluated both for
what they encode and for the ordering induced by the planning cost they
expose.

\paragraph{Limitations.}
Our evidence is limited to four short-horizon simulated tasks, LeWM-family
checkpoints with ViT-Tiny, Euclidean goal costs, and CEM. Each configuration
has one training run, so uncertainty covers evaluation variation rather than
training-initialization variation. Plan-Real requires simulator rollouts and
loses statistical power under exact real-cost ties. We fix $\alpha=0.1$ and
test neither its sensitivity nor gradient-based planning. Generalization to
robots, partial observability, larger backbones, other planners, and learned
rewards, goal costs, or positive-semidefinite latent metrics remains open.

\paragraph{Generative-AI disclosure.}
Generative AI assisted language editing and code review. It was not a component
of the method or evaluation, and the authors verified all resulting text and
code.

\clearpage
{\small
\bibliography{references}

@unpublished{lecun2022jepa,
  author = {LeCun, Yann},
  title  = {A Path Towards Autonomous Machine Intelligence},
  note   = {Position paper, version 0.9.2, OpenReview},
  year   = {2022},
  url    = {https://openreview.net/forum?id=BZ5a1r-kVsf}
}

@inproceedings{assran2023ijepa,
  author    = {Assran, Mahmoud and Duval, Quentin and Misra, Ishan and Bojanowski, Piotr and Vincent, Pascal and Rabbat, Michael and {LeCun}, Yann and Ballas, Nicolas},
  title     = {Self-Supervised Learning from Images with a Joint-Embedding Predictive Architecture},
  booktitle = {Proceedings of the IEEE/CVF Conference on Computer Vision and Pattern Recognition (CVPR)},
  pages     = {15619--15629},
  year      = {2023}
}

@article{lewm2026,
  author  = {Maes, Lucas and Le Lidec, Quentin and Scieur, Damien and {LeCun}, Yann and Balestriero, Randall},
  title   = {{LeWorldModel}: Stable End-to-End Joint-Embedding Predictive Architecture from Pixels},
  journal = {arXiv preprint arXiv:2603.19312},
  year    = {2026},
  url     = {https://arxiv.org/abs/2603.19312}
}

@article{balestriero2025lejepa,
  author  = {Balestriero, Randall and {LeCun}, Yann},
  title   = {{LeJEPA}: Provable and Scalable Self-Supervised Learning Without the Heuristics},
  journal = {arXiv preprint arXiv:2511.08544},
  year    = {2025},
  url     = {https://arxiv.org/abs/2511.08544}
}

@inproceedings{zhou2025dinowm,
  author    = {Zhou, Gaoyue and Pan, Hengkai and {LeCun}, Yann and Pinto, Lerrel},
  title     = {{DINO-WM}: World Models on Pre-trained Visual Features Enable Zero-shot Planning},
  booktitle = {Proceedings of the 42nd International Conference on Machine Learning (ICML)},
  series    = {Proceedings of Machine Learning Research},
  volume    = {267},
  pages     = {79115--79135},
  year      = {2025},
  publisher = {PMLR},
  url       = {https://proceedings.mlr.press/v267/zhou25t.html}
}

@inproceedings{sobal2025pldm,
  author    = {Sobal, Vlad and Zhang, Wancong and Cho, Kyunghyun and Balestriero, Randall and Rudner, Tim G. J. and {LeCun}, Yann},
  title     = {Stress-Testing Offline Reward-Free Reinforcement Learning: A Case for Planning with Latent Dynamics Models},
  booktitle = {7th Robot Learning Workshop: Towards Robots with Human-Level Abilities},
  year      = {2025},
  url       = {https://openreview.net/forum?id=jON7H6A9UU}
}

@inproceedings{hafner2019dreamer,
  author    = {Hafner, Danijar and Lillicrap, Timothy and Ba, Jimmy and Norouzi, Mohammad},
  title     = {Dream to Control: Learning Behaviors by Latent Imagination},
  booktitle = {Proceedings of the 8th International Conference on Learning Representations (ICLR)},
  year      = {2020},
  url       = {https://openreview.net/forum?id=S1lOTC4tDS}
}

@inproceedings{hafner2020dreamerv2,
  author    = {Hafner, Danijar and Lillicrap, Timothy and Norouzi, Mohammad and Ba, Jimmy},
  title     = {Mastering {Atari} with Discrete World Models},
  booktitle = {Proceedings of the 9th International Conference on Learning Representations (ICLR)},
  year      = {2021},
  url       = {https://openreview.net/forum?id=0oabwyZbOu}
}

@article{hafner2023dreamerv3,
  author  = {Hafner, Danijar and Pasukonis, Jurgis and Ba, Jimmy and Lillicrap, Timothy},
  title   = {Mastering Diverse Domains through World Models},
  journal = {arXiv preprint arXiv:2301.04104},
  year    = {2023},
  url     = {https://arxiv.org/abs/2301.04104}
}

@inproceedings{chua2018pets,
  author    = {Chua, Kurtland and Calandra, Roberto and McAllister, Rowan and Levine, Sergey},
  title     = {Deep Reinforcement Learning in a Handful of Trials Using Probabilistic Dynamics Models},
  booktitle = {Advances in Neural Information Processing Systems 31 (NeurIPS)},
  year      = {2018},
  url       = {https://proceedings.neurips.cc/paper_files/paper/2018/hash/3de568f8597b94bda53149c7d7f5958c-Abstract.html}
}

@inproceedings{pathak2017curiosity,
  author    = {Pathak, Deepak and Agrawal, Pulkit and Efros, Alexei A. and Darrell, Trevor},
  title     = {Curiosity-Driven Exploration by Self-Supervised Prediction},
  booktitle = {Proceedings of the 34th International Conference on Machine Learning (ICML)},
  series    = {Proceedings of Machine Learning Research},
  volume    = {70},
  pages     = {2778--2787},
  year      = {2017},
  publisher = {PMLR}
}

@inproceedings{zhang2021invariant,
  author    = {Zhang, Amy and McAllister, Rowan and Calandra, Roberto and Gal, Yarin and Levine, Sergey},
  title     = {Learning Invariant Representations for Reinforcement Learning without Reconstruction},
  booktitle = {Proceedings of the 9th International Conference on Learning Representations (ICLR)},
  year      = {2021},
  url       = {https://openreview.net/forum?id=-2FCwDKRREu}
}

@inproceedings{ding2019goal,
  author    = {Ding, Yiming and Florensa, Carlos and Abbeel, Pieter and Phielipp, Mariano},
  title     = {Goal-Conditioned Imitation Learning},
  booktitle = {Advances in Neural Information Processing Systems 32 (NeurIPS)},
  year      = {2019},
  url       = {https://proceedings.neurips.cc/paper/2019/hash/c8d3a760ebab631565f8509d84b3b3f1-Abstract.html}
}

@inproceedings{ferns2004bisimulation,
  author    = {Ferns, Norm and Panangaden, Prakash and Precup, Doina},
  title     = {Metrics for Finite {Markov} Decision Processes},
  booktitle = {Proceedings of the 20th Conference on Uncertainty in Artificial Intelligence (UAI)},
  pages     = {162--169},
  publisher = {AUAI Press},
  year      = {2004}
}

@article{rubinstein1999cem,
  author  = {Rubinstein, Reuven Y.},
  title   = {The Cross-Entropy Method for Combinatorial and Continuous Optimization},
  journal = {Methodology and Computing in Applied Probability},
  volume  = {1},
  number  = {2},
  pages   = {127--190},
  year    = {1999},
  doi     = {10.1023/A:1010091220143}
}

@inproceedings{ha2018world,
  author    = {Ha, David and Schmidhuber, J{\"u}rgen},
  title     = {Recurrent World Models Facilitate Policy Evolution},
  booktitle = {Advances in Neural Information Processing Systems 31 (NeurIPS)},
  year      = {2018},
  url       = {https://proceedings.neurips.cc/paper/2018/hash/2de5d16682c3c35007e4e92982f1a2ba-Abstract.html}
}

@inproceedings{hafner2019planet,
  author    = {Hafner, Danijar and Lillicrap, Timothy and Fischer, Ian and Villegas, Ruben and Ha, David and Lee, Honglak and Davidson, James},
  title     = {Learning Latent Dynamics for Planning from Pixels},
  booktitle = {Proceedings of the 36th International Conference on Machine Learning (ICML)},
  series    = {Proceedings of Machine Learning Research},
  volume    = {97},
  pages     = {2555--2565},
  year      = {2019},
  publisher = {PMLR}
}

@article{schrittwieser2020muzero,
  author  = {Schrittwieser, Julian and Antonoglou, Ioannis and Hubert, Thomas and Simonyan, Karen and Sifre, Laurent and Schmitt, Simon and Guez, Arthur and Lockhart, Edward and Hassabis, Demis and Graepel, Thore and Lillicrap, Timothy and Silver, David},
  title   = {Mastering {Atari}, {Go}, Chess and {Shogi} by Planning with a Learned Model},
  journal = {Nature},
  volume  = {588},
  number  = {7839},
  pages   = {604--609},
  year    = {2020}
}

@inproceedings{hansen2022tdmpc,
  author    = {Hansen, Nicklas and Su, Hao and Wang, Xiaolong},
  title     = {Temporal Difference Learning for Model Predictive Control},
  booktitle = {Proceedings of the 39th International Conference on Machine Learning (ICML)},
  series    = {Proceedings of Machine Learning Research},
  volume    = {162},
  pages     = {8387--8406},
  year      = {2022},
  publisher = {PMLR}
}

@inproceedings{hansen2024tdmpc2,
  author    = {Hansen, Nicklas and Su, Hao and Wang, Xiaolong},
  title     = {{TD-MPC2}: Scalable, Robust World Models for Continuous Control},
  booktitle = {The Twelfth International Conference on Learning Representations (ICLR)},
  year      = {2024},
  url       = {https://openreview.net/forum?id=Oxh5CstDJU}
}

@inproceedings{schwarzer2021spr,
  author    = {Schwarzer, Max and Anand, Ankesh and Goel, Rishab and Hjelm, R. Devon and Courville, Aaron and Bachman, Philip},
  title     = {Data-Efficient Reinforcement Learning with Self-Predictive Representations},
  booktitle = {Proceedings of the 9th International Conference on Learning Representations (ICLR)},
  year      = {2021},
  url       = {https://openreview.net/forum?id=uCQfPZwRaUu}
}

@inproceedings{laskin2020curl,
  author    = {Laskin, Michael and Srinivas, Aravind and Abbeel, Pieter},
  title     = {{CURL}: Contrastive Unsupervised Representations for Reinforcement Learning},
  booktitle = {Proceedings of the 37th International Conference on Machine Learning (ICML)},
  series    = {Proceedings of Machine Learning Research},
  volume    = {119},
  pages     = {5639--5650},
  year      = {2020},
  publisher = {PMLR}
}

@inproceedings{andrychowicz2017her,
  author    = {Andrychowicz, Marcin and Wolski, Filip and Ray, Alex and Schneider, Jonas and Fong, Rachel and Welinder, Peter and McGrew, Bob and Tobin, Josh and Abbeel, Pieter and Zaremba, Wojciech},
  title     = {Hindsight Experience Replay},
  booktitle = {Advances in Neural Information Processing Systems 30 (NeurIPS)},
  year      = {2017},
  url       = {https://proceedings.neurips.cc/paper/2017/hash/453fadbd8a1a3af50a9df4df899537b5-Abstract.html}
}

@inproceedings{he2022mae,
  author    = {He, Kaiming and Chen, Xinlei and Xie, Saining and Li, Yanghao and Doll{\'a}r, Piotr and Girshick, Ross},
  title     = {Masked Autoencoders Are Scalable Vision Learners},
  booktitle = {Proceedings of the IEEE/CVF Conference on Computer Vision and Pattern Recognition (CVPR)},
  pages     = {16000--16009},
  year      = {2022}
}

@inproceedings{caron2021dino,
  author    = {Caron, Mathilde and Touvron, Hugo and Misra, Ishan and J{\'e}gou, Herv{\'e} and Mairal, Julien and Bojanowski, Piotr and Joulin, Armand},
  title     = {Emerging Properties in Self-Supervised Vision Transformers},
  booktitle = {Proceedings of the IEEE/CVF International Conference on Computer Vision (ICCV)},
  pages     = {9650--9660},
  year      = {2021}
}

@article{bardes2024vjepa,
  author  = {Bardes, Adrien and Garrido, Quentin and Ponce, Jean and Chen, Xinlei and Rabbat, Michael and {LeCun}, Yann and Assran, Mahmoud and Ballas, Nicolas},
  title   = {Revisiting Feature Prediction for Learning Visual Representations from Video},
  journal = {arXiv preprint arXiv:2404.08471},
  year    = {2024},
  url     = {https://arxiv.org/abs/2404.08471}
}

@inproceedings{gelada2019deepmdp,
  author    = {Gelada, Carles and Kumar, Saurabh and Buckman, Jacob and Nachum, Ofir and Bellemare, Marc G.},
  title     = {{DeepMDP}: Learning Continuous Latent Space Models for Representation Learning},
  booktitle = {Proceedings of the 36th International Conference on Machine Learning (ICML)},
  series    = {Proceedings of Machine Learning Research},
  volume    = {97},
  pages     = {2170--2179},
  year      = {2019},
  publisher = {PMLR}
}

@inproceedings{castro2020scalable,
  author    = {Castro, Pablo Samuel},
  title     = {Scalable Methods for Computing State Similarity in Deterministic {Markov} Decision Processes},
  booktitle = {Proceedings of the 34th AAAI Conference on Artificial Intelligence},
  pages     = {10069--10076},
  year      = {2020}
}

@inproceedings{grimm2020value,
  author    = {Grimm, Christopher and Barreto, Andr{\'e} and Singh, Satinder and Silver, David},
  title     = {The Value Equivalence Principle for Model-Based Reinforcement Learning},
  booktitle = {Advances in Neural Information Processing Systems 33 (NeurIPS)},
  year      = {2020}
}

@inproceedings{alain2017probes,
  author    = {Alain, Guillaume and Bengio, Yoshua},
  title     = {Understanding Intermediate Layers Using Linear Classifier Probes},
  booktitle = {International Conference on Learning Representations (ICLR) Workshop Track},
  year      = {2017},
  url       = {https://openreview.net/forum?id=HJ4-rAVtl}
}

@inproceedings{wu2023daydreamer,
  author    = {Wu, Philipp and Escontrela, Alejandro and Hafner, Danijar and Abbeel, Pieter and Goldberg, Ken},
  title     = {{DayDreamer}: World Models for Physical Robot Learning},
  booktitle = {Proceedings of the 6th Conference on Robot Learning (CoRL)},
  series    = {Proceedings of Machine Learning Research},
  volume    = {205},
  pages     = {2226--2240},
  year      = {2023},
  publisher = {PMLR}
}

@article{tassa2018deepmind,
  author  = {Tassa, Yuval and Doron, Yotam and Muldal, Alistair and
             Erez, Tom and Li, Yazhe and Casas, Diego de Las and
             Budden, David and Abdolmaleki, Abbas and Merel, Josh and
             Lefrancq, Andrew and Lillicrap, Timothy and Riedmiller, Martin},
  title   = {DeepMind Control Suite},
  journal = {arXiv preprint arXiv:1801.00690},
  year    = {2018}
}

@inproceedings{park2025ogbench,
  author    = {Park, Seohong and Frans, Kevin and Eysenbach, Benjamin and
               Levine, Sergey},
  title     = {{OGBench}: Benchmarking Offline Goal-Conditioned {RL}},
  booktitle = {International Conference on Learning Representations (ICLR)},
  year      = {2025}
}
}
\appendix
\section*{Appendix}

\section{Experimental Details}
\label{app:exp-details}

We provide here the full details necessary to reproduce all experiments.
The implementation is built on top of the open-source LeWM codebase
\citep{lewm2026}, with auxiliary heads and decision losses added in a
modular fashion that does not modify the base world model architecture.

\subsection{Backbone Architecture}

The encoder is a Vision Transformer (ViT-Tiny) backbone with patch size
$14$ and image resolution $224 \times 224$. The CLS token is projected
to a latent embedding of dimension $192$. The predictor is a $6$-layer
Transformer with $16$ attention heads, head dimension $64$, MLP
dimension $2048$, dropout $0.1$, and embedding dropout $0$. History
length is $3$ frames and the predictor outputs one future latent per
forward pass. All variants share this exact backbone configuration. The
only differences are in the auxiliary head set and the loss weights.

\subsection{Auxiliary Head Architecture}

Each auxiliary head is a $2$-layer MLP with hidden dimension $256$,
GELU activations, and LayerNorm, followed by a final linear layer to
the prediction target. Concretely:
\begin{itemize}
\item \textbf{Inverse-action head}: input $[z_t, z_{t+1}] \in
  \mathbb{R}^{2 \cdot 192}$, output $\hat{a}_t \in \mathbb{R}^{d_a}$.
\item \textbf{Goal-conditioned action head}: input $[z_t, z_g] \in
  \mathbb{R}^{2 \cdot 192}$, output $\hat{a}_t \in \mathbb{R}^{d_a}$.
\item \textbf{Reward proxy head}: input $[z_t, a_t]$, scalar output.
\item \textbf{Value proxy head}: input $z_t$, scalar output.
\end{itemize}
Here $d_a=5d_{\rm raw}$, where $d_{\rm raw}$ is the per-step action dimension.
For the all-heads configuration we instantiate all four heads.  The
canonical no-R/V implementation used for the PushT DA-LeWM checkpoint
instantiates only the inverse-action and goal-action heads
(\texttt{DecisionHeadsNoRV}).  For compatibility with the original
cross-environment training jobs, the Reacher, Cube, and TwoRoom
learning-curve checkpoints retain unused reward/value modules but set
both corresponding loss weights to zero.  Thus these checkpoints share
the DA-LeWM training objective, although their parameter-initialization
stream is not byte-identical to the no-R/V implementation.

\paragraph{Reward and value proxy targets.}
For the stored all-heads checkpoint, the implementation used
\begin{equation}
\begin{aligned}
r_t &= -\|s_t^{[:2]}-s_g^{[:2]}\|_2 \\
    &\quad -0.1\,d_{\angle}(s_t^{[2]},s_g^{[2]}),\\
V_t &= \sum_{k=t}^{T-1}\gamma^{k-t}r_k,\qquad \gamma=0.99.
\end{aligned}
\end{equation}
Here $d_{\angle}$ is wrapped angular distance. Neither target is an environment reward nor optimal-policy value.
Under the stored PushT state layout, these coordinates do not match the
intended block-pose fields, and no corrected all-heads checkpoint was trained.
We therefore retain this result only as an exploratory, implementation-specific ablation. It
cannot determine whether correctly specified R/V supervision helps or hurts, or explain the gap causally.

\subsection{Training Configuration}

The matched-budget ablations train every variant for one epoch.  The
learning-dynamics comparison uses matched 100-epoch training schedules
for LeWM and DA-LeWM and analyzes the first ten saved checkpoints.
Optimizer, data, and model settings are otherwise identical within each
comparison.

\begin{center}
\small
\begin{tabular}{@{}L{0.48\columnwidth}L{0.44\columnwidth}@{}}
\toprule
Hyperparameter & Value \\
\midrule
Optimizer & AdamW \\
Learning rate & $5 \times 10^{-5}$ \\
Weight decay & $10^{-3}$ \\
Batch size & $128$ \\
Gradient clipping & $1.0$ \\
Mixed precision & bf16 \\
Training schedule & $1$ epoch (ablations), $100$ epochs (first $10$ analyzed) \\
Train/val split & $0.9 / 0.1$ \\
Training randomness & Matched across variants \\
SIGReg weight $\lambda_\text{sig}$ & $0.09$ \\
SIGReg knots & $17$ \\
SIGReg projections & $1024$ \\
Inverse weight $\alpha$ & $0.1$ \\
Goal-action weight $\beta$ & $0.1$ \\
Reward weight (all-heads only) & $0.05$ \\
Value weight (all-heads only) & $0.01$ \\
Reward/value discount $\gamma$ & $0.99$ \\
Goal strategy & clip-final-state \\
\bottomrule
\end{tabular}
\end{center}

The objective-level difference between all-heads and DA-LeWM is the
presence or absence of reward/value losses. The implementation detail
above records whether zero-weight modules are retained. The base world
model loss
($\mathcal{L}_\text{pred} + \lambda_\text{sig} \mathcal{L}_\text{SIGReg}$)
is identical across all variants except No-SIGReg, where the SIGReg
term is dropped.

\subsection{Computing Infrastructure}

Training and evaluation use one NVIDIA A100-SXM4-80GB or
A800-SXM4-80GB GPU per process (80\,GB memory, driver 535.129.03) on
x86-64 Linux 5.4 cluster nodes with Intel Xeon Platinum 8362 host CPUs.
The software environment is Python 3.10.20, PyTorch 2.7.1
(\texttt{cu118}), Lightning 2.6.1, Hydra 1.3.2,
\texttt{stable-worldmodel} 0.0.6, \texttt{stable-pretraining} 0.1.6,
\texttt{dm-control} 1.0.40, MuJoCo 3.8.0, OGBench 1.2.1, NumPy 2.2.6,
SciPy 1.15.3, and scikit-learn 1.7.2.  Training uses bfloat16 mixed
precision. Diagnostics and summary statistics use float32 model
inference and double-precision host-side aggregation where required.

\subsection{Datasets}

All data are simulated trajectories from established public
environments. This work introduces no new dataset and uses no human
subjects or personally identifiable information.  PushT and TwoRoom
exercise contact-rich manipulation and obstacle-constrained
navigation, Reacher supplies a standard continuous-control test from
the DeepMind Control Suite~\citep{tassa2018deepmind}, and Cube supplies
an offline goal-conditioned manipulation task from
OGBench~\citep{park2025ogbench}.  The exact dataset identifiers are:
\begin{center}
\small
\setlength{\tabcolsep}{3pt}
\begin{tabular}{@{}L{0.17\columnwidth}L{0.47\columnwidth}L{0.28\columnwidth}@{}}
\toprule
Task & Dataset identifier & Provenance \\
\midrule
PushT & \path{pusht_expert_train} & LeWM release \\
Reacher & \path{dmc/reacher_random} & Control Suite / LeWM \\
Cube & \path{ogbench/cube_single_expert} & OGBench / LeWM \\
TwoRoom & \path{tworoom} & LeWM release \\
\bottomrule
\end{tabular}
\end{center}
Training follows the four-frame, frame-skip-5 construction in
Section~\ref{sec:method}. The 25-row goal clip below is evaluation-only.
All tasks use the same 0.9/0.1 trajectory-level train/validation split.
The code package records dataset identifiers and preprocessing.

\paragraph{Task definitions.}
The diagnostic state, raw action dimension, online success rule, and
terminal diagnostic cost are:
\begin{center}
\scriptsize
\setlength{\tabcolsep}{1.5pt}
\begin{tabular}{@{}lL{0.28\columnwidth}cL{0.43\columnwidth}@{}}
\toprule
Task & Diagnostic state / cost & $d_{\rm raw}$ & Online success \\
\midrule
PushT & agent $xy$, block $xy$/angle, agent velocity (7-D), L2 & 2 &
$\|\Delta(\text{agent }xy,\text{block }xy)\|_2<20$ and wrapped
$|\Delta\text{ block angle}|<\pi/9$ \\
TwoRoom & agent $xy$, L2 & 2 & $\|\Delta xy\|_2<16$ pixels \\
Reacher & 2-D joint position, L2 & 2 &
$\|\Delta q\|_\infty<0.05$ radians \\
Cube & cube $xyz$, L2 & 5 &
$\|\Delta xyz\|_2\leq0.04$ m \\
\bottomrule
\end{tabular}
\end{center}
Here $d_{\rm raw}$ is the per-raw-step action dimension. Each model action block
concatenates five raw actions. All listed L2 costs are unweighted,
simulator-only terminal distances and are not exposed to the planner.

\subsection{Evaluation Configuration}

Online evaluation is performed by initializing $50$ vectorized
environments per evaluation seed and running CEM-based planning. Each
reported mean aggregates $3 \times 50 = 150$ episodes. Short-budget
ablations and ten-checkpoint learning curves each use $3$ evaluation
seeds per task. Task-specific sweeps are evaluated separately and are
not pooled.

\begin{center}
\small
\begin{tabular}{@{}L{0.55\columnwidth}L{0.39\columnwidth}@{}}
\toprule
Parameter & Value \\
\midrule
Episodes per evaluation seed & $50$ \\
Short-budget evaluation repeats & $3$ seeds per task \\
Learning-curve evaluation repeats & $3$ seeds per task, task sweeps separate \\
Goal clip (\texttt{goal\_offset\_steps}) & $25$ rows (endpoint $+24$) \\
Evaluation horizon (\texttt{eval\_budget}) & $50$ raw steps \\
Planning horizon $H$ & $5$ action blocks \\
Receding-horizon stride & $5$ action blocks \\
Action block size (frame-skip) & $5$ raw actions \\
CEM samples $N$ (online) & $300$ \\
CEM iterations $K$ (online) & $30$ \\
CEM elite size $E$ (online) & $30$ \\
CEM proposal variance scale & $1.0$ \\
\bottomrule
\end{tabular}
\end{center}

The dataset loader uses the end-exclusive slice
$[\text{start},\text{start}+25)$, so the goal is the row at index
$\text{start}+24$. The five-block planning and receding horizons each
execute $25$ raw actions. These parameters are identical across
environments.
Environment-specific callables (state setters, goal setters) are
implemented as configured in the codebase and are not changed across
variants.

\subsection{Plan-Real and CEM-stage Spearman}

Diagnostic Spearman correlations use a separate evaluation procedure
that reuses the same trained checkpoints. We sample $n = 30$ (start,
goal) pairs per checkpoint from the held-out validation split. For each
pair we draw $N = 64$ candidate action sequences from $\mathcal{N}(0,
I)$ in normalized action-block space, compute latent costs from the
world model, and roll out each candidate in the environment to compute
real costs. For PushT, both Plan-Real and CEM-stage diagnostics use
$c_\text{real}=\|s_H-s_g\|_2$ on the unweighted seven-dimensional
simulator state (agent position, block position and angle, and agent
velocity), recorded as \texttt{pusht\_full\_state\_l2\_v1}.

CEM-stage Spearman uses the same CEM trajectory as the online MPC
controller ($N\!=\!300$ samples $\times$ $30$ iterations with the
top $30$ as the elite set, $n_\text{pairs}\!=\!15$) and reports
Spearman at iteration $0$ (random), iteration $15$ (mid), and the
last iteration (elite). At the elite stage each pair's Spearman is
computed over the top-$30$ candidates of the converged CEM
distribution. At the random/mid stages it is computed over $\min(N, \max(2K, 64))\!=\!64$
candidates uniformly subsampled from the $N\!=\!300$ population.
This budget is identical for every checkpoint reported in
Table~\ref{tab:cem-elite} and Appendix~\ref{app:beta-sweep}.
All Spearman calculations use average ranks for exact ties. Diagnostic
artifacts also record defined and undefined pair counts.

\subsection{Information Probes}

Linear probes are fit on frozen latents from the trained world model.
Each probe is an affine ridge regressor (closed-form least squares with
$\ell_2$ coefficient $10^{-4}$ and an intercept). No hidden layer or
nonlinear activation is used. The exact input--target pairs are:
$z_t\!\mapsto\!s_t$ (state),
$[z_t,z_{t+1}]\!\mapsto\!a_t$ (inverse action),
$[z_t,z_g]\!\mapsto\!a_t$ (goal action),
$[z_t,a_t]\!\mapsto\!r_t$ (reward proxy), and
$z_t\!\mapsto\!V_t$ (value proxy). The R/V targets are the legacy
proxies defined above. Probe $R^2$ is reported on a fixed held-out
$20\%$ split. The probe definition and split are identical across model
variants, so differences in $R^2$ reflect properties of the frozen
latent rather than probe capacity.

\section{Geometric Signature of Inverse-Action Training}
\label{app:geom-scatter}

We measure the global rank correlation between latent displacement
magnitude $\|\Delta z\|$ and action magnitude $\|a\|$ over $38{,}400$
held-out transitions. Inverse-action training changes a near-zero
baseline correlation into a moderate positive correlation, a
falsifiable empirical signature of Claim~\ref{claim:inverse}. We do
not assert this is a logical consequence of the inverse loss alone:
the inverse head can in principle recover actions through directional
not norm structure. The figure reports what we observe under the actual
training procedure.

\begin{figure*}[t]
  \centering
  \includegraphics[width=0.72\textwidth]{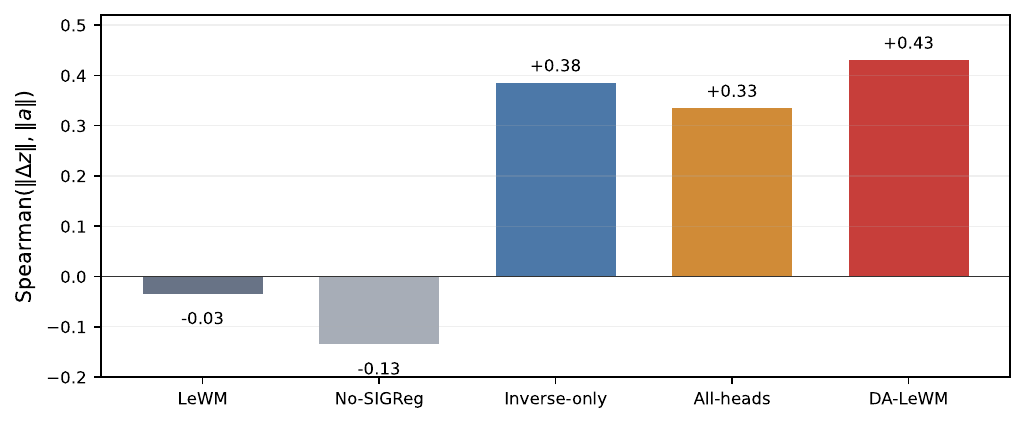}
  \caption{\emph{Global Spearman correlation between $\|\Delta z\|$
  and $\|a\|$ on $38{,}400$ held-out PushT transitions.} Values are
  read directly from the saved diagnostic output. No synthetic points
  are shown.}
  \label{fig:geom}
\end{figure*}

\section{Additional Ablations}

\subsection{Per-Method Probes and Online Success on PushT}
\label{app:probe-online-detail}

Figure~\ref{fig:probe-vs-online} (main text) plots the four
non-degenerate methods. Table~\ref{tab:probe-online} below restates
the same numbers and additionally lists the No-SIGReg degenerate
baseline as a sanity check. Probes detect the outright information
loss of No-SIGReg ($R^2$ negative or near zero) but do not separate
the four methods that train without representation collapse.

\begin{table}[t]
\centering
\small
\setlength{\tabcolsep}{3pt}
\begin{tabular}{@{}lcccc@{}}
\toprule
Model & Success & State $R^2$ & Action $R^2$ & Goal $R^2$ \\
\midrule
LeWM                 & $49.3 \pm 12.2$ & $0.90$ & $0.86$ & $0.78$ \\
No-SIGReg            & $\phantom{0}2.0 \pm \phantom{0}2.0$
                     & $-6.14$ & $0.14$ & $-0.67$ \\
Inverse-only         & $64.0 \pm \phantom{0}7.2$ & $0.89$ & $0.88$ & $0.77$ \\
All-heads            & $71.3 \pm \phantom{0}4.2$ & $0.89$ & $0.88$ & $0.78$ \\
\textbf{DA-LeWM} & $\mathbf{92.7 \pm \phantom{0}1.2}$
                     & $\mathbf{0.90}$ & $\mathbf{0.89}$ & $\mathbf{0.80}$ \\
\bottomrule
\end{tabular}
\caption{\emph{Information probes and online success on PushT
($3$ evaluation seeds).} Across the four non-collapsed methods, probe scores
change by less than $0.03$ in $R^2$ while online success spans
$43$ percentage points. No-SIGReg is shown as a degenerate sanity
check.}
\label{tab:probe-online}
\end{table}

\subsection{Goal-Action Weight Sensitivity}
\label{app:beta-sweep}

We sweep the goal-action weight $\beta$ on PushT online success and
report Plan-Real / CEM-stage Spearman for $\beta = 0.3$ alongside
$\beta = 0.1$.

\begin{center}
\begin{tabular}{@{}lc@{}}
\toprule
$\beta$ & PushT online (\%) \\
\midrule
$0.03$ & $63.3 \pm 12.2$ \\
$\mathbf{0.1}$ & $\mathbf{92.7 \pm \phantom{0}1.2}$ \\
$0.3$ & $89.3 \pm \phantom{0}3.1$ \\
\bottomrule
\end{tabular}
\end{center}

\paragraph{Spearman at $\beta = 0.3$ on PushT.} Plan-Real Spearman is
$+0.448 \pm 0.207$ ($29/30$ positive pairs), and CEM-stage Spearman
is $(\text{random}, \text{mid}, \text{elite}) = (+0.549, +0.232,
-0.038)$. Relative to LeWM, the Plan-Real and random-stage gains are
significant (paired $p\!=\!0.002$ for both), whereas mid and elite are
not ($p\!=\!0.90/0.33$).

\paragraph{Additional same-budget initialization.}
A separately initialized $\beta=0.1$ checkpoint gives Plan-Real
Spearman $+0.436 \pm 0.224$ ($29/30$ positive pairs) and CEM-stage
Spearman $(+0.542,+0.237,+0.028)$ at random/mid/elite. Its Plan-Real
and random-stage gains over LeWM are significant (paired
$p\!<\!0.001$ and $p\!=\!0.002$), whereas mid and elite are not
($p\!=\!0.85/0.91$). Together, the two additional checks reproduce
global alignment improvement and shared local saturation.

\paragraph{Cross-environment behavior.} On Reacher, $\beta = 0.3$
regresses to $78.0\%$ (vs.\ $84.0\%$ at $\beta = 0.1$ and $82.0\%$
LeWM baseline). Reacher per-variant Spearman numbers including
$\beta = 0.3$ are reported in Appendix~\ref{app:cross-env-sp}. The
Reacher regression motivates the conservative cross-environment
default $\beta = 0.1$.

\section{Scope of the Sufficient-Condition Diagnostics}
\label{app:diagnostics}

Proposition~\ref{prop:sufficient} is a sufficient-condition analysis,
not a claim that its constants are identified by the training losses.
In particular, empirical average prediction loss does not establish
the pointwise terminal-rollout constant $\epsilon_H$, and finite held-out pairs
cannot certify global bi-Lipschitz constants $\mu_f$ and $L_f$.
We therefore do not attach numerical estimates to those constants.
Instead, the paper tests observable implications at three levels:
SIGReg collapse via latent-cost dynamic range and tie-aware Plan-Real
Spearman on PushT, TwoRoom, and Reacher (main text), action-conditioned local geometry via
Figure~\ref{fig:geom}, and the rank-preservation implication via the
soft-margin check in Supplementary Section~F.  These measurements are
diagnostics consistent with the analysis, not proofs that its
assumptions hold out of distribution.

\section{Elite-Neighborhood Local Latent Geometry}
\label{app:elite-geom}

This appendix details the elite-neighborhood diagnostic referenced in
Section~\ref{sec:rv-elite}. The diagnostic was originally designed to
discriminate between two candidate mechanisms for an apparent elite-stage
Spearman gap:
(i) gradients from the R/V losses distort the local latent metric in
the elite neighborhood (\emph{gradient interference}), or
(ii) R/V heads encode global task-progress scalars that are
uninformative once candidates are near-optimal but do not distort
local geometry (\emph{global-progress encoding}).
After the same-budget re-evaluation in Table~\ref{tab:cem-elite}
and Supplementary Section~C showed that elite-stage Spearman is
uniformly weak across the five main variants and two additional DA-LeWM
checks (with paired tests vs LeWM all non-significant), the elite-stage
\emph{gap} no longer requires a mechanistic explanation. The
diagnostic results below remain useful: they provide direct evidence
that all four head configurations share the same weakly-informative
local latent metric in the elite neighborhood
($\mathrm{Pearson}(\log d^z, \log d^s) \approx 0.08$ across variants),
which independently corroborates the elite-stage uniformity reported
in Table~\ref{tab:cem-elite}.

\paragraph{Procedure.} For each of $n\!=\!15$ held-out
$(\text{start},\text{goal})$ pairs from the PushT validation set we
run CEM with $300$ candidates, $30$ iterations and elite size
$E\!=\!K\!=\!30$ (matching the planner setting) and retain the
final-iteration elite set. For every elite candidate $i$ we record
\begin{itemize}
\item $z_i \in \mathbb{R}^{192}$, the predicted final-step embedding
  produced by the JEPA rollout (the latent the planner scores against
  $z_g$), and
\item $s_i \in \mathbb{R}^7$, the real PushT final-step state after
  rolling out the action sequence in the simulator.
\end{itemize}
We then compute pairwise distances within the elite set,
$d^z_{ij} = \|z_i - z_j\|_2$ and
$d^s_{ij} = \|s_i - s_j\|_\Sigma$, with
$\Sigma = \mathrm{diag}(\sigma^{-2})$ and $\sigma$ the per-coordinate
state standard deviation on the training set (diagonal Mahalanobis). The local geometric ratio is
$\rho_{ij} = d^z_{ij} / d^s_{ij}$. A locally isotropic latent has
roughly constant $\rho_{ij}$. Gradient interference would produce
direction-dependent contraction with elevated $\rho_{ij}$ variance.

\paragraph{Results.} Table~\ref{tab:elite-geom} reports per-pair
statistics aggregated over the $n\!=\!15$ pairs.

\begin{table}[ht!]
\centering
\small
\setlength{\tabcolsep}{3pt}
\begin{tabular}{@{}lccc@{}}
\toprule
Variant & $\rho$ mean & $\rho$ CV & Log spread \\
\midrule
LeWM (no DS)              & $2.13 \pm 0.86$ & $0.710 \pm 0.200$ & $1.87 \pm 0.26$ \\
Inverse-only              & $2.90 \pm 1.08$ & $0.714 \pm 0.172$ & $2.03 \pm 0.36$ \\
All-heads                 & $2.92 \pm 1.20$ & $0.685 \pm 0.116$ & $1.98 \pm 0.41$ \\
DA-LeWM ($\beta\!=\!0.1$) & $2.90 \pm 1.04$ & $0.767 \pm 0.118$ & $2.12 \pm 0.20$ \\
\bottomrule
\end{tabular}

\vspace{3pt}
\begin{tabular}{@{}lc@{}}
\toprule
Variant & $\mathrm{Pearson}(\log d^z,\log d^s)$ \\
\midrule
LeWM (no DS)              & $+0.092 \pm 0.067$ \\
Inverse-only              & $+0.080 \pm 0.084$ \\
All-heads                 & $+0.084 \pm 0.065$ \\
DA-LeWM ($\beta\!=\!0.1$) & $+0.050 \pm 0.066$ \\
\bottomrule
\end{tabular}
\caption{\emph{Elite-neighborhood local latent geometry on PushT}
($K\!=\!30$ elites/pair, $n\!=\!15$ pairs). Mean $\pm$ std across
pairs. CV is $\sigma(\rho)/\mu(\rho)$. Log spread is
$\log(p_{95}/p_5)$.}
\label{tab:elite-geom}
\end{table}

Across the four variants, the coefficient of variation spans
$0.685$--$0.767$ and log spread spans $1.87$--$2.12$ in the elite
neighborhood. In particular, all-heads does not differ significantly
from inverse-only, DA-LeWM, or LeWM on either statistic. This provides
no evidence for the R/V-specific gradient-interference hypothesis:
if the proxy R/V losses produced a detectable increase in local
anisotropy, all-heads should differ from the R/V-free configurations.
Table~\ref{tab:elite-geom-paired} reports the paired contrasts.

\begin{table}[ht!]
\centering
\small
\setlength{\tabcolsep}{3pt}
\begin{tabular}{@{}lcc@{}}
\toprule
Contrast & $\Delta\mathrm{CV}\ (t)$ & $\Delta$ log spread $(t)$ \\
\midrule
All-heads    $-$ Inverse-only & $-0.029\ (-0.73)$ & $-0.049\ (-0.87)$ \\
All-heads    $-$ DA-LeWM      & $-0.081\ (-1.85)$ & $-0.135\ (-1.09)$ \\
All-heads    $-$ LeWM         & $-0.024\ (-0.47)$ & $+0.119\ (+0.85)$ \\
DA-LeWM      $-$ Inverse-only & $+0.053\ (+0.99)$ & $+0.086\ (+0.87)$ \\
DA-LeWM      $-$ LeWM         & $+0.057\ (+0.93)$ & $+0.255\ (+2.50)$ \\
Inverse-only $-$ LeWM         & $+0.004\ (+0.06)$ & $+0.169\ (+1.34)$ \\
\bottomrule
\end{tabular}
\caption{\emph{Paired comparisons of elite-neighborhood local
geometry on PushT} ($n\!=\!15$ pairs, same pair indices across
variants). Positive numbers indicate the first variant has higher
anisotropy. Parentheses report paired $t$. All all-heads contrasts
are non-significant. Among the remaining contrasts, only DA-LeWM versus
LeWM log spread is significant ($p=0.026$).}
\label{tab:elite-geom-paired}
\end{table}

The Pearson correlation
$\mathrm{Pearson}(\log d^z_{ij}, \log d^s_{ij})$
is uniformly low ($+0.050$--$+0.092$) across all four variants,
showing weak within-elite association between latent and real-state
distance regardless of which auxiliary heads are present. This is
consistent with the full-state elite-stage results in
Table~\ref{tab:cem-elite}: elite correlations remain near zero and the
paired test against LeWM is non-significant for every main
configuration. The isolated DA-LeWM--LeWM log-spread contrast does not
support an R/V-specific effect because all three contrasts involving
the all-heads checkpoint are non-significant.

\paragraph{Implementation.} The diagnostic is implemented in
\texttt{decision/eval\_elite\_local\_geometry.py} and re-uses the CEM
inner loop and dataset / scaler / env wiring of the behavioral
Plan-Real Spearman pipeline. Total wall-clock for the four variants
on a single 80\,GB A100/A800 GPU is $\approx\!9$ minutes
($n\!=\!15$ pairs,
$30$ CEM iterations $\times$ $300$ samples per iteration), so the
diagnostic is essentially free relative to the CEM-stage Spearman
evaluation it complements.

\section{Soft-Margin Consistency Check for Corollary~\ref{cor:rank}}
\label{app:corollary-check}

\paragraph{Procedure.} Implementation:
\texttt{decision/eval\_corollary\_margin\_check.py}. For each PushT
validation pair we sample $N\!=\!64$ random plans (matching the
Plan-Real Spearman protocol), compute $c_\text{lat}^{(i)}$ via JEPA
and the full-state PushT cost $c_\text{real}^{(i)}=\|s_H^{(i)}-s_g\|_2$
via environment rollout, fit
$\kappa\!=\!\sum_i c_\text{lat}^{(i)} c_\text{real}^{(i)} /
\sum_i (c_\text{real}^{(i)})^2$ (OLS through origin), and define
residuals $\eta_i\!=\!|c_\text{lat}^{(i)}\!-\!\kappa c_\text{real}^{(i)}|$.
The soft margin condition is
$\kappa\,\Delta_{ij}\!>\!\eta_i\!+\!\eta_j$ for the ordered pair with
$c_\text{real}^{(i)}\!<\!c_\text{real}^{(j)}$.
We define $p$ as the fraction of such distinct-real-cost pairs satisfying
the condition. Kendall's $\tau_a$ is the mean concordance sign over all
unordered candidate pairs, with tied pairs contributing zero. If $q$ is
the fraction of pairs with distinct real costs, the tie-aware bound is
$\tau_a\geq q(2p-1)$. Here $q=1$ for the reported full-state PushT pairs.

\paragraph{Results.} Table~\ref{tab:corollary-check} reports per-variant
means over $n\!=\!20$ pairs.

\begin{table}[ht!]
\centering
\small
\setlength{\tabcolsep}{1.25pt}
\begin{tabular}{@{}lccc@{}}
\toprule
Variant & $p$ & $\rho_s$ & $\tau_a$ \\
\midrule
LeWM (no DS)              & $0.288 \pm 0.073$ & $+0.338 \pm 0.206$ & $+0.239 \pm 0.148$ \\
No-SIGReg                 & $0.221 \pm 0.052$ & $+0.053 \pm 0.209$ & $+0.037 \pm 0.142$ \\
Inverse-only              & $0.314 \pm 0.088$ & $+0.448 \pm 0.207$ & $+0.324 \pm 0.154$ \\
All-heads                 & $0.317 \pm 0.093$ & $+0.457 \pm 0.236$ & $+0.331 \pm 0.178$ \\
DA ($\beta\!=\!0.1$)      & $0.325 \pm 0.091$ & $+0.479 \pm 0.228$ & $+0.347 \pm 0.174$ \\
\bottomrule
\end{tabular}

\vspace{3pt}
\begin{tabular}{@{}lcc@{}}
\toprule
Variant & $2p\!-\!1$ & $\kappa$ \\
\midrule
LeWM (no DS)              & $-0.424 \pm 0.146$ & $+1.93$ \\
No-SIGReg                 & $-0.557 \pm 0.103$ & $+0.003$ \\
Inverse-only              & $-0.371 \pm 0.176$ & $+2.28$ \\
All-heads                 & $-0.367 \pm 0.187$ & $+2.32$ \\
DA ($\beta\!=\!0.1$)      & $-0.351 \pm 0.182$ & $+2.34$ \\
\bottomrule
\end{tabular}
\caption{\emph{Corollary~\ref{cor:rank} soft-margin consistency check on
PushT} ($N\!=\!64$ random plans/pair, $n\!=\!20$ pairs, mean
$\pm$ std across pairs). $\tau_a \geq 2p-1$ holds for every pair.}
\label{tab:corollary-check}
\end{table}

Across the five variant means, $p$ tracks $\rho_s$ with Pearson
correlation $+0.9996$ (Spearman rank $+1.0$). Pooling all
$5\!\times\!20\!=\!100$ (variant, pair) points gives
Pearson$(p,\rho_s)=+0.895$ ($p=3.54\!\times\!10^{-36}$). The Kendall
lower bound $\tau_a\!\geq\!2p\!-\!1$ holds for all $100$ pairs. The
empirical gap $\tau_a-(2p-1)$ is $0.670\pm0.083$. Thus the soft condition
tracks ranking consistently while remaining a conservative, loose
lower bound.

\section{Cross-Environment Spearman Scope}
\label{app:cross-env-sp}

This appendix collects the behavioral Plan-Real Spearman and CEM-stage
Spearman measurements on Reacher and explains why the same summary is
not meaningful on contact-sparse Cube, supporting the cross-environment
discussion in Section~\ref{sec:cross-env}.
Table~\ref{tab:plan-real-pusht} and Table~\ref{tab:cem-elite} in the
main body cover PushT in detail.

\subsection{Reacher}

Reacher exhibits a high baseline alignment (LeWM $\mathrm{Sp} =
+0.504$). Auxiliary objectives leave behavioral Spearman essentially
unchanged, consistent with limited empirical headroom in this
diagnostic. These measurements do not estimate the constants or
tightness of Proposition~\ref{prop:sufficient}.

\begin{center}
\small
\setlength{\tabcolsep}{3pt}
\begin{tabular}{@{}lcccc@{}}
\toprule
Variant & Behavior & Random & Mid & Elite \\
\midrule
LeWM & $+0.504 \pm 0.244$ & $+0.468$ & $+0.446$ & $+0.220$ \\
No-SIGReg & $+0.001 \pm 0.135$ & $-0.015$ & $-0.018$ & $-0.045$ \\
Inverse-only & $+0.506 \pm 0.254$ & $+0.486$ & $+0.457$ & $+0.111$ \\
DA ($\beta\!=\!0.1$) & $+0.523 \pm 0.217$ & $+0.492$ & $+0.460$ & $\mathbf{+0.231}$ \\
DA ($\beta\!=\!0.3$) & $+0.514 \pm 0.242$ & $+0.486$ & $+0.467$ & $+0.097$ \\
\bottomrule
\end{tabular}
\end{center}

The variation across alignment-improving variants is within sampling
noise ($n = 30$ pairs, std $\sim 0.22$), and online success similarly
varies by less than $3$ percentage points across the same variants.

\subsection{Cube: Tie-Dominated Real Costs}

Random Cube plans often fail to contact the object. Consequently, many
candidates have exactly the same terminal object position and real
cost. Some $64$-candidate populations contain a tie block of size $63$,
and fully constant real-cost vectors also occur in both behavioral and
CEM-stage populations. Average-rank Spearman is undefined for a
constant vector and is low-support when only one or two distinct costs
remain.

We therefore do not report Cube Spearman as a cross-variant metric.
The appropriate conclusion is narrower: random-plan rank agreement is
not informative in this contact-sparse regime. Cube comparisons in the
paper use online success, while the tied-cost frequency is itself
reported as a limitation of Plan-Real and CEM-stage diagnostics.

\section{Training Dynamics and Checkpoint Selection}
\label{app:training-dynamics}

We evaluate checkpoints from epochs 1 through 10 of matched 100-epoch
schedules for LeWM and DA-LeWM. Training randomness is matched across
the two methods, and online evaluation uses $50$ episodes for each of
$3$ evaluation seeds per task. Task-specific sweeps are evaluated
separately and are not pooled. Intermediate checkpoints are
non-monotonic, especially on Reacher and Cube. We therefore use epoch
10 as a common, pre-specified endpoint and do not select a separate
best epoch for each method. The complete per-epoch means and sample
standard deviations underlying Figure~\ref{fig:training-dynamics} are
included in the accompanying results file.

\end{document}